\documentclass{article}
\usepackage{microtype}
\usepackage{graphicx}
\usepackage{subfigure}
\usepackage{booktabs} 

\usepackage{hyperref}
\usepackage{titletoc}

\usepackage[accepted]{icml2026}

\usepackage{amsmath}
\usepackage{amssymb}
\usepackage{mathtools}
\usepackage{amsthm}

\usepackage[capitalize,noabbrev]{cleveref}

\theoremstyle{plain}

\usepackage[textsize=tiny]{todonotes}
\usepackage[utf8]{inputenc} 
\usepackage[T1]{fontenc}        
\usepackage{amsfonts}     
\usepackage{nicefrac}              
\usepackage{url}
\usepackage{makecell}
\usepackage[table, dvipsnames]{xcolor}
\usepackage{tikz}
\usetikzlibrary{
    positioning,      
    fit,              
    decorations.pathreplacing, 
    arrows.meta       
}
\usepackage{wrapfig}
\usepackage{dsfont}
\usepackage[bb=boondox]{mathalfa}
\usepackage{algorithm}
\usepackage{algorithmic}
\newcommand{\x}{\mathbf{x}}
\newcommand{\w}{\mathbf{w}}
\newcommand{\G}{\mathbf{G}}
\newcommand{\e}{\mathbf{e}}
\newcommand{\p}{\mathbf{p}}
\usepackage{tcolorbox}
\usepackage{helvet}
\usepackage{multirow}
\usepackage{titletoc}
\usepackage{enumitem}
\usepackage{hhline}
\allowdisplaybreaks[4]

\newtcolorbox{mybox}{
    colback=blue!5, 
    top=1pt,        
    bottom=1pt, 
    left=1.5pt, 
    right=1.5pt
}

\icmltitlerunning{Learning Dynamics of Zeroth-Order Optimization: A Kernel Perspective}

\begin{document}

\twocolumn[
\icmltitle{Learning Dynamics of Zeroth-Order Optimization: A Kernel Perspective}




\begin{icmlauthorlist}
\icmlauthor{Zhe Li}{rit}
\icmlauthor{Bicheng Ying}{g}
\icmlauthor{Zidong Liu}{c}
\icmlauthor{Haibo Yang}{rit}
\end{icmlauthorlist}

\icmlaffiliation{rit}{Department of Computing and Information Sciences Ph.D., Rochester Institute of Technology, Rochester, NY, USA}
\icmlaffiliation{g}{Google Inc., Los Angeles, CA, USA}
\icmlaffiliation{c}{ComboCurve Inc., Houston, TX, USA}

\icmlcorrespondingauthor{Haibo Yang}{hbycis@rit.edu}
\icmlcorrespondingauthor{Bicheng Ying}{ybc@google.com}

\icmlkeywords{Machine Learning, ICML}

\vskip 0.2in
]



\printAffiliationsAndNotice{}

\begin{abstract}\vspace{-0.2mm}

    Classical optimization theory establishes that zeroth-order (ZO) algorithms suffer from a dimension-dependent slowdown, with convergence rates typically scaling with the model dimension compared to first-order methods. 
    However, in contrast to these theoretical expectations, a growing body of recent work demonstrates the successful application of ZO methods to fine-tuning Large Language Models (LLMs) with billions of parameters.
    To explain this paradox, we derive the one-step learning dynamics of ZO SGD, where the empirical Neural Tangent Kernel (eNTK) naturally emerges as the key term governing the learning behavior.
    Inspection of the eNTK produced by ZO-SGD reveals that each element corresponds to the inner product of neural tangent vectors projected onto a random low-dimensional subspace. 
    Thus, by invoking the Johnson-Lindenstrauss Lemma, our analysis shows that the fidelity of the ZO eNTK is governed primarily by the number of perturbations.
    Crucially, the approximation error depends on the model output size rather than the massive parameter dimension. 
    This dimension-free property provides a theoretical justification for the scalability of ZO methods to LLMs finetuning tasks. 
    We believe that this kernel-based framework offers a novel perspective for understanding ZO methods within the context of learning dynamics.
    \vspace{-3mm}
\end{abstract}
\newtheorem{lemma}{{Lemma}}
\newtheorem{assumption}{{Assumption}}
\newtheorem{theorem}{{Theorem}}
\newtheorem{corollary}{{Corollary}}
\newtheorem{definition}{{Definition}}
\newtheorem{remark}{{Remark}}

\newcommand{\HRule}{\rule{\linewidth}{0.5mm}}
\def\tran{^{\mathsf{T}}}
\def\invtran{^{\mathsf{-T}}}
\def\one{\mathds{1}}
\def\hr{\noindent \\\hrule}
\def\sgn{{\rm sgn}}

\newcommand{\cmark}{\ding{51}}%
\newcommand{\xmark}{\ding{55}}%

\newcommand{\bp}{ \begin{proof}}
	\newcommand{\ep}{\end{proof} }

\newcommand{\Ex}{\mathbb{E}\hspace{0.05cm}}
\newcommand{\Exf}{\mathbb{E}_{\mathcal{F}_{i}}\hspace{0.05cm}}
\newcommand{\Exfp}{\mathbb{E}_{\mathcal{F}_{i-1}}\hspace{0.05cm}}
\newcommand{\bm}[1]{\mbox{\boldmath $#1$}}
\newcommand{\no}{\noindent}
\newcommand{\be}{\begin{equation}}
\newcommand{\ee}{\end{equation}}
\newcommand{\bqq}{\begin{eqnarray}}
\newcommand{\eqq}{\end{eqnarray}}
\newcommand{\bal}{\begin{align}}
\newcommand{\eal}{\end{align}}
\newcommand{\bqn}{\begin{eqnarray*}}
	\newcommand{\eqn}{\end{eqnarray*}}
\newcommand{\nn}{\nonumber}
\newcommand{\ba}{\left[ \begin{array}}
	\newcommand{\ea}{\\ \end{array} \right]}
\newcommand{\qd}{\hfill{$\blacksquare$}}
\newcommand{\define}{\;\stackrel{\Delta}{=}\;}
\newcommand{\aseq}{\;\stackrel{a.s.}{=}\;}
\newcommand{\Tr}{\mbox{\rm {\small Tr}}}

\def\btheta  {{\boldsymbol \theta}}
\def\bxi        {{\boldsymbol \xi}}
\def\bmu        {{\boldsymbol \mu}}
\def\bpsi       {{\boldsymbol \psi}}
\def\bsigma  {{\boldsymbol \sigma}}
\def\bphi  {{\boldsymbol \phi}}
\def\blambda {{\boldsymbol \lambda}}
\def\bTheta  {{\boldsymbol \Theta}}
\def\bSigma  {{\boldsymbol \Sigma}}
\def\bLambda {{\boldsymbol \Lambda}}
\def\bgamma {{\boldsymbol \gamma}}
\def\bepsilon {{\boldsymbol \epsilon}}
\def\Pr {{\mathbb{P}}}
\def\vec {{{\mathsf{vec}}}}
\def\tphi   {\widetilde{\boldsymbol \phi}}

\def\A{{\boldsymbol{A}}}
\def\B{{\boldsymbol{B}}}
\def\C{{\boldsymbol{C}}}
\def\D{{\boldsymbol{D}}}
\def\E{{\boldsymbol{E}}}
\def\F{{\boldsymbol{F}}}
\def\G{{\boldsymbol{G}}}
\def\H{{\boldsymbol{H}}}
\def\I{{\boldsymbol{I}}}
\def\J{{\boldsymbol{J}}}
\def\K{{\boldsymbol{K}}}
\def\L{{\boldsymbol{L}}}
\def\M{{\boldsymbol{M}}}
\def\N{{\boldsymbol{N}}}
\def\O{{\boldsymbol{O}}}
\def\P{{\boldsymbol{P}}}
\def\Q{{\boldsymbol{Q}}}
\def\R{{\boldsymbol{R}}}
\def\S{{\boldsymbol{S}}}
\def\T{{\boldsymbol{T}}}
\def\U{{\boldsymbol{U}}}
\def\V{{\boldsymbol{V}}}
\def\W{{\boldsymbol{W}}}
\def\X{{\boldsymbol{X}}}
\def\Y{{\boldsymbol{Y}}}
\def\Z{{\boldsymbol{Z}}}

\def\a{{\boldsymbol{a}}}
\def\b{{\boldsymbol{b}}}
\def\c{{\boldsymbol{c}}}
\def\d{{\boldsymbol{d}}}
\def\e{{\boldsymbol{e}}}
\def\f{{\boldsymbol{f}}}
\def\g{{\boldsymbol{g}}}
\def\h{{\boldsymbol{h}}}
\def\i{{\boldsymbol{i}}}
\def\j{{\boldsymbol{j}}}
\def\k{{\boldsymbol{k}}}
\def\l{{\boldsymbol{l}}}
\def\m{{\boldsymbol{m}}}
\def\n{{\boldsymbol{n}}}
\def\o{{\boldsymbol{o}}}
\def\p{{\boldsymbol{p}}}
\def\q{{\boldsymbol{q}}}
\def\r{{\boldsymbol{r}}}
\def\s{{\boldsymbol{s}}}
\def\t{{\boldsymbol{t}}}
\def\uu{{\boldsymbol{u}}}
\def\v{{\boldsymbol{v}}}
\def\w{{\boldsymbol{w}}}
\def\x{{\boldsymbol{x}}}
\def\y{{\boldsymbol{y}}}
\def\z{{\boldsymbol{z}}}

\newcommand{\va}{{\mathbf{a}}}
\newcommand{\vb}{{\mathbf{b}}}
\newcommand{\vc}{{\mathbf{c}}}
\newcommand{\vd}{{\mathbf{d}}}
\newcommand{\ve}{{\mathbf{e}}}
\newcommand{\vf}{{\mathbf{f}}}
\newcommand{\vg}{{\mathbf{g}}}
\newcommand{\vh}{{\mathbf{h}}}
\newcommand{\vi}{{\mathbf{i}}}
\newcommand{\vj}{{\mathbf{j}}}
\newcommand{\vk}{{\mathbf{k}}}
\newcommand{\vl}{{\mathbf{l}}}
\newcommand{\vm}{{\mathbf{m}}}
\newcommand{\vn}{{\mathbf{n}}}
\newcommand{\vo}{{\mathbf{o}}}
\newcommand{\vp}{{\mathbf{p}}}
\newcommand{\vq}{{\mathbf{q}}}
\newcommand{\vr}{{\mathbf{r}}}
\newcommand{\vs}{{\mathbf{s}}}
\newcommand{\vt}{{\mathbf{t}}}
\newcommand{\vu}{{\mathbf{u}}}
\newcommand{\vw}{{\mathbf{w}}}
\newcommand{\vx}{{\mathbf{x}}}
\newcommand{\vy}{{\mathbf{y}}}
\newcommand{\vz}{{\mathbf{z}}}

\newcommand{\vA}{{\mathbf{A}}}
\newcommand{\vB}{{\mathbf{B}}}
\newcommand{\vC}{{\mathbf{C}}}
\newcommand{\vD}{{\mathbf{D}}}
\newcommand{\vE}{{\mathbf{E}}}
\newcommand{\vF}{{\mathbf{F}}}
\newcommand{\vG}{{\mathbf{G}}}
\newcommand{\vH}{{\mathbf{H}}}
\newcommand{\vI}{{\mathbf{I}}}
\newcommand{\vJ}{{\mathbf{J}}}
\newcommand{\vK}{{\mathbf{K}}}
\newcommand{\vL}{{\mathbf{L}}}
\newcommand{\vM}{{\mathbf{M}}}
\newcommand{\vN}{{\mathbf{N}}}
\newcommand{\vO}{{\mathbf{O}}}
\newcommand{\vP}{{\mathbf{P}}}
\newcommand{\vQ}{{\mathbf{Q}}}
\newcommand{\vR}{{\mathbf{R}}}
\newcommand{\vS}{{\mathbf{S}}}
\newcommand{\vT}{{\mathbf{T}}}
\newcommand{\vU}{{\mathbf{U}}}
\newcommand{\vV}{{\mathbf{V}}}
\newcommand{\vW}{{\mathbf{W}}}
\newcommand{\vX}{{\mathbf{X}}}
\newcommand{\vY}{{\mathbf{Y}}}
\newcommand{\vZ}{{\mathbf{Z}}}

\newcommand{\cA}{{\mathcal{A}}}
\newcommand{\cB}{{\mathcal{B}}}
\newcommand{\cC}{{\mathcal{C}}}
\newcommand{\cD}{{\mathcal{D}}}
\newcommand{\cE}{{\mathcal{E}}}
\newcommand{\cF}{{\mathcal{F}}}
\newcommand{\cG}{{\mathcal{G}}}
\newcommand{\cH}{{\mathcal{H}}}
\newcommand{\cI}{{\mathcal{I}}}
\newcommand{\cJ}{{\mathcal{J}}}
\newcommand{\cK}{{\mathcal{K}}}
\newcommand{\cL}{{\mathcal{L}}}
\newcommand{\cM}{{\mathcal{M}}}
\newcommand{\cN}{{\mathcal{N}}}
\newcommand{\cO}{{\mathcal{O}}}
\newcommand{\cP}{{\mathcal{P}}}
\newcommand{\cQ}{{\mathcal{Q}}}
\newcommand{\cR}{{\mathcal{R}}}
\newcommand{\cS}{{\mathcal{S}}}
\newcommand{\cT}{{\mathcal{T}}}
\newcommand{\cU}{{\mathcal{U}}}
\newcommand{\cV}{{\mathcal{V}}}
\newcommand{\cW}{{\mathcal{W}}}
\newcommand{\cX}{{\mathcal{X}}}
\newcommand{\cY}{{\mathcal{Y}}}
\newcommand{\cZ}{{\mathcal{Z}}}

\newcommand{\tA}{{\widetilde{A}}}
\newcommand{\tB}{{\widetilde{B}}}
\newcommand{\tC}{{\widetilde{C}}}
\newcommand{\tD}{{\widetilde{D}}}
\newcommand{\tE}{{\widetilde{E}}}
\newcommand{\tF}{{\widetilde{F}}}
\newcommand{\tG}{{\widetilde{G}}}
\newcommand{\tH}{{\widetilde{H}}}
\newcommand{\tI}{{\widetilde{I}}}
\newcommand{\tJ}{{\widetilde{J}}}
\newcommand{\tK}{{\widetilde{K}}}
\newcommand{\tL}{{\widetilde{L}}}
\newcommand{\tM}{{\widetilde{M}}}
\newcommand{\tN}{{\widetilde{N}}}
\newcommand{\tO}{{\widetilde{O}}}
\newcommand{\tP}{{\widetilde{P}}}
\newcommand{\tQ}{{\widetilde{Q}}}
\newcommand{\tR}{{\widetilde{R}}}
\newcommand{\tS}{{\widetilde{S}}}
\newcommand{\tT}{{\widetilde{T}}}
\newcommand{\tU}{{\widetilde{U}}}
\newcommand{\tV}{{\widetilde{V}}}
\newcommand{\tW}{{\widetilde{W}}}
\newcommand{\tX}{{\widetilde{X}}}
\newcommand{\tY}{{\widetilde{Y}}}
\newcommand{\tZ}{{\widetilde{Z}}}

\newcommand{\scp}{\boldsymbol{\scriptstyle{\mathcal{P}}}}
\newcommand{\sr}{{\scriptstyle{\mathcal{R}}}}
\newcommand{\su}{\boldsymbol{\scriptstyle{\mathcal{U}}}}
\newcommand{\sv}{\boldsymbol{\scriptstyle{\mathcal{V}}}}
\newcommand{\sw}{\boldsymbol{\scriptstyle{\mathcal{W}}}}
\newcommand{\sx}{\boldsymbol{\scriptstyle{\mathcal{X}}}}
\newcommand{\sz}{\boldsymbol{\scriptstyle{\mathcal{Z}}}}
\newcommand{\sxb}{\boldsymbol{\scriptstyle{\mathcal{X}}}}
\newcommand{\tw}{\widetilde{\boldsymbol{w}}}

\newcommand{\swb}{{\boldsymbol{\scriptstyle{\mathcal{W}}}}}
\newcommand{\barA}{\overline{A}}
\newcommand{\grad}{{\nabla}}
\newcommand{\emp}{{\mathrm{emp}}}
\newcommand{\mb}{{\mathrm{mb}}}
\def\bE{\mathbb{E}}
\def\filt{\boldsymbol{\mathcal{F}}}
\def\ntran{^{-\mathsf{T}}}

\newcommand{\eq}[1]{\begin{align}#1\end{align}}
\newcommand{\beqn}{\begin{eqnarray}}
\newcommand{\eeqn}{\end{eqnarray}}
\newcommand{\defeq}{{\stackrel{\triangle}{=}}}
\newcommand{\nnb}{\nonumber \\}
\newcommand{\dom}{{\mathrm{dom}}} 
\newcommand{\RR}{\mathbb{R}}
\newcommand{\reals}{\mathbb{R}}
\newcommand{\ones}{\mathds{1}}
\newcommand{\Span}{\mathrm{Span}}
\newcommand{\Null}{\mathrm{Null}}
\newcommand{\stack}{\mathrm{stack}}

\DeclareFontFamily{U}{mathx}{\hyphenchar\font45}
\DeclareFontShape{U}{mathx}{m}{n}{
	<5> <6> <7> <8> <9> <10>
	<10.95> <12> <14.4> <17.28> <20.74> <24.88>
	mathx10
}{}

\def\real{{\mathbb{R}}}
\def\complex{{\mathbb{C}}}

\def\doubleunderline#1{\underline{\underline{#1}}}

\def\Zint{{\mathchoice{\setbox1=\hbox{\sf Z}\copy1\kern-.75\wd1\box1}
		{\setbox1=\hbox{\sf Z}\copy1\kern-.75\wd1\box1}
		{\setbox1=\hbox{\scriptsize\sf Z}\copy1\kern-.75\wd1\box1}
		{\setbox1=\hbox{\scriptsize\sf Z}\copy1\kern-.75\wd1\box1}}}

\newcommand{\qedwhite}{\hfill \ensuremath{\Box}}

\newcommand{\softmax}{{\mathsf{Softmax}}}

\newcommand{\xo}{{\color{blue} \mathbf{x}_o}}
\newcommand{\xu}{{\color{orange} \mathbf{x}_u}}
\newcommand{\yu}{{\color{orange} \mathbf{y}_u}}

\section{Introduction}
Zeroth-order (ZO) optimization \cite{spall2002multivariate, ghadimi_stochastic_2013, nesterov2017random} has recently emerged as a pivotal technique in modern machine learning, particularly in scenarios where gradient computation is computationally prohibitive or strictly unavailable. 
By estimating gradients solely through function evaluations, ZO methods offer a memory-efficient \cite{malladi2023finetuning, chen2025enhancing} and communication-efficient \cite{qin2024federated, li2025achieving, li2025reconciling} alternative to first-order (FO) algorithms. 
These properties have made ZO optimization increasingly popular for deploying large-scale models on resource-constrained edge devices and for black-box adversarial attacks \cite{chen2017zoo,liu2020primer}.

Despite these practical advantages, ZO methods have historically faced a significant theoretical disadvantage compared to their gradient-based counterparts \cite{spall2002multivariate, conn2009introduction}.
Classical optimization theory establishes that ZO algorithms suffer from a dimension-dependent slowdown, with convergence rates typically scaling with the model dimension $d$ \cite{ghadimi_stochastic_2013, nesterov2017random, shamir2017optimal}. 
In the worst-case scenarios, the variance of the gradient estimator scales linearly with $d$, suggesting that ZO optimization should be prohibitively slow for high-dimensional models \cite{duchi2015optimal, liu2020primer}. 
Given that modern deep learning models often possess billions of parameters, this "curse of dimensionality" would theoretically render ZO methods impractical for large-scale training \cite{Golovin2020Gradientless, malladi2023finetuning}.

However, recent empirical breakthroughs contradict this pessimistic theoretical outlook.
A growing body of work reports the successful application of ZO methods to fine-tuning LLMs with billions of parameters \cite{malladi2023finetuning, chen2024deepzero, zhao2025secondorder}. 
In these high-dimensional tasks, ZO algorithms frequently achieve performance competitive with FO methods, exhibiting convergence behaviors that defy the worst-case scaling laws predicted by classical analysis \cite{yu2025zeroth}. 
This discrepancy suggests that the standard optimization perspective, which compresses learning dynamics into scalar loss values, fails to capture the structural nuances of how ZO updates drive knowledge acquisition in deep networks \cite{aghajanyan2021intrinsic, malladi2023finetuning, jin2026hi}.

In this work, we resolve this paradox by investigating ZO optimization through the lens of the empirical Neural Tangent Kernel (eNTK) \cite{jacot2018neural}. 
Rather than focusing solely on parameter-space convergence, we analyze the learning dynamics in function space. 
We show that the effective kernel induced by ZO optimization, which we term the ZO-eNTK, can be understood as a geometric projection of the standard FO eNTK onto a random subspace spanned by the perturbation vectors.
This geometric perspective reveals a fundamental connection between ZO learning dynamics and the Johnson-Lindenstrauss (JL) Lemma \cite{johnson1984extensions}. 
By leveraging the JL Lemma, we derive rigorous approximation bounds between the ZO and FO optimization trajectories. 
Our analysis yields three critical insights:
\begin{itemize}[
    leftmargin=1em,
    rightmargin=1em,
    topsep=1.2pt,
    itemsep=-0.2pt
]
    \item Number of Perturbations: The fidelity of the ZO eNTK is mainly governed by the number of perturbations.
    \item Distribution Robustness: The specific choice of perturbation distribution is secondary to the number of perturbations. 
    Regardless of whether the continuous Gaussian or the discrete Rademacher distribution is used, both are sufficient for effective kernel approximation, explaining the empirical success of simple binary perturbation strategies. 
    \item Dimension Independence: Most importantly, we show that the efficacy of the ZO method depends not on the massive model dimension $d$, but rather on the output vocabulary size $V$. 
    This dimension-free property provides a theoretical justification for the scalability of ZO methods to LLMs, explaining why they remain efficient even when $d$ is vast. 
\end{itemize}

Beyond these specific insights, we believe another contribution of this work lies in establishing a kernel-based framework for ZO optimization, offering a novel perspective that complements the existing optimization-centric view, enriching our understanding of derivative-free learning dynamics.
\section{Preliminaries of Learning Dynamics and Zeroth-Order Optimization}\label{sec:prelim}
\setlength{\belowdisplayskip}{6pt} \setlength{\belowdisplayshortskip}{6pt}
\setlength{\abovedisplayskip}{9pt} \setlength{\abovedisplayshortskip}{9pt}
The standard empirical risk minimization problem in machine learning is typically formulated as \vspace{-1mm}
\begin{align}
    \min_{\theta} \ell (\theta) = \sum\limits_{(x,y) \in \mathcal{D}} \mathcal{L}(f_\theta(x), y),
\end{align}
where $\theta$ denotes the model parameters, $\ell$ is the objective function, $\mathcal{D}$ represents the dataset containing input-label pairs $(x,y)$, and $\mathcal{L}$ is the loss function (e.g., cross-entropy, mean squared error, or negative log likelihood). 
From a purely optimization-centric perspective, learning algorithms are often analyzed solely through the objective function $\ell$, which brings considerable analytical simplicity. 
For example, the ZO (stochastic) gradient descent rule can be concisely written as
\begin{align}
    \theta_{t+1} 
    = & \theta_t - \eta \left[ \frac{\ell(\theta_t + \mu u_t) - \ell(\theta_t - \mu u_t)}{2 \mu} u_t \right] \label{eq.zo.sgd} \\
    = & \theta_t - \eta \langle \nabla \ell(\theta_t), u_t \rangle u_t + \cO(\mu\eta), \label{eq.zo.directional} 
\end{align}
where $t$ is the iteration index, $\eta$ is the learning rate, $\mu>0$ is the smoothing parameter, and $u \in \RR^d$ is the random perturbation vector sampled from a certain distribution (e.g., Gaussian).
The estimator $\frac{\ell(\theta_t + \mu u_t) - \ell(\theta_t - \mu u_t)}{2 \mu} u_t$ corresponds to the Simultaneous Perturbation Stochastic Approximation (SPSA) gradient with central differences~\cite{spall2002multivariate, nesterov2017random}. 
Eq.~\eqref{eq.zo.directional}, derived via Taylor expansion, provides a more tractable form for analysis. 
Crucially, this formulation implies that for a sufficiently small smoothing radius $\mu$, the SPSA estimator serves as a proxy for the directional derivative along the random direction $u_t$.

While analytically convenient, an exclusive focus on optimization obscures structural information intrinsic to the learning process of modern machine learning models \cite{zhang2017understanding, belkin2019reconciling}. More specifically, as shown in Eq.~\eqref{eq.zo.sgd}, standard optimization metrics focus exclusively on the objective $\ell(\cdot)$, compressing the complex interplay between the model $f_\theta$ and the data $(x, y)$ into a single scalar loss value. This reduction significantly obscures the structural nuances of the learning trajectory. Thus, it fails to capture how model updates influence predictions at the level of individual data points.


\begin{figure*}[t]
    \vspace{-0.7mm}
    \centering
    \includegraphics[width=1\linewidth]{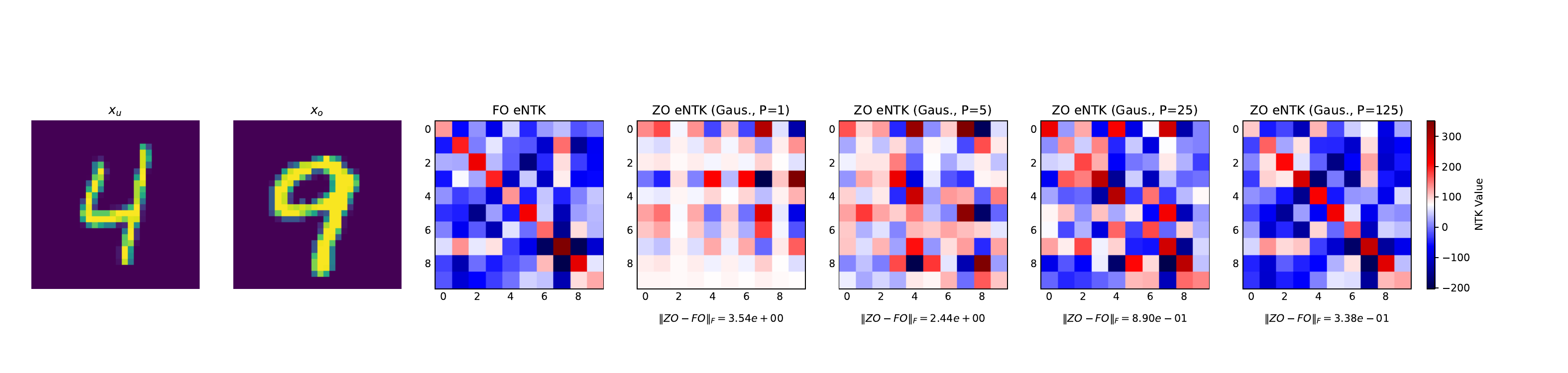}\\[0.25em]
    \includegraphics[width=1\linewidth]{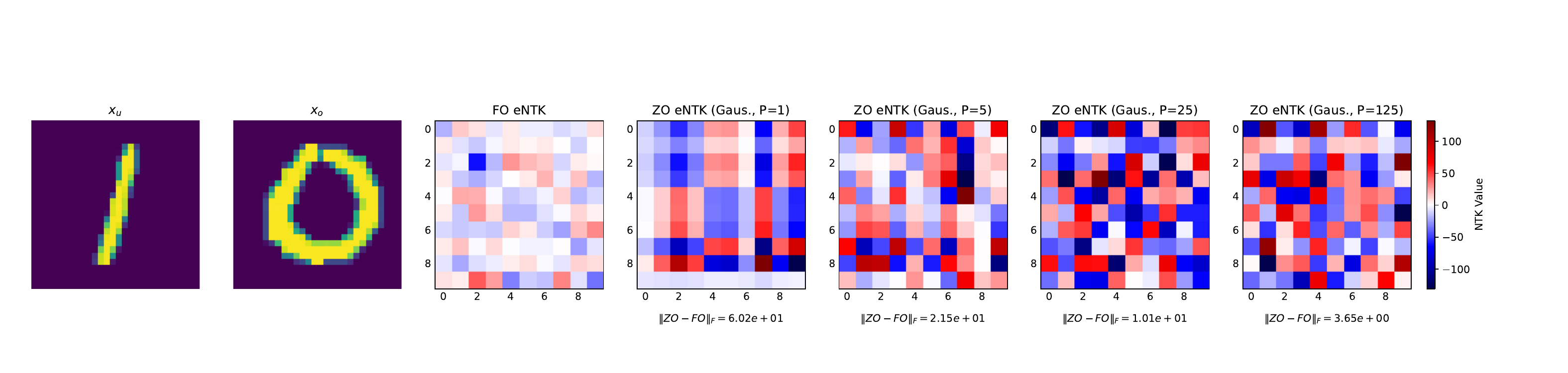}
    \vspace{-5mm}
    \caption{ZO eNTK v.s. FO eNTK. 
    The pair with high similarity: $4$ and $9$. 
    The pair with low similarity: $1$ and $0$. The relative Frobenius norm error $\|ZO-FO \|_{F} = \frac{\| \mathcal{K}(\xu, \xo) - \mathcal{K}(\xu, \xo; U_{t,P})\|_F}{\|\mathcal{K}(\xu, \xo)\|_F}$.\vspace{-3mm}
    }
    \label{fig:aa}
\end{figure*}

\begin{figure}[t]
    \centering
    \includegraphics[width=0.85 \linewidth]{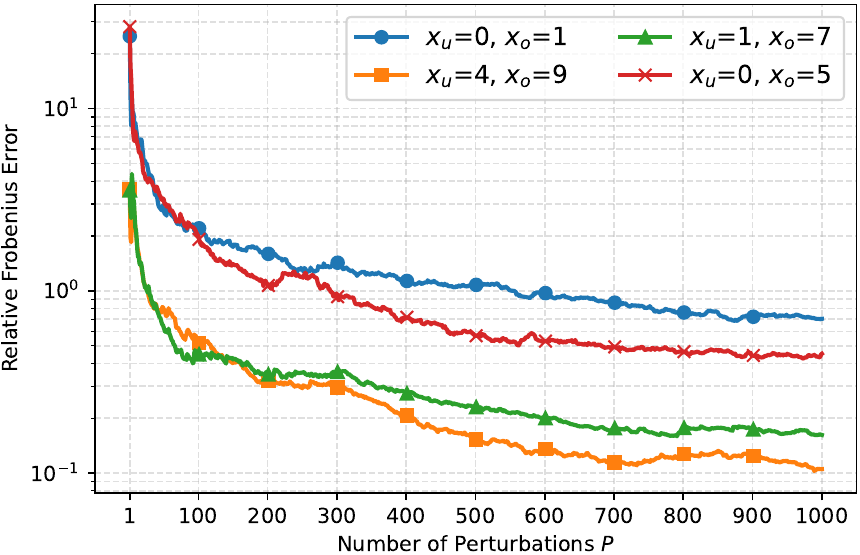}
    \vspace{-1mm}
    \caption{Convergence of Frobenius Norm Error between the ZO and FO eNTK. 
    Pairs with high similarity: $4$ and $9$, $1$ and $7$. 
    Pairs with low similarity: $0$ and $1$, $0$ and $5$. 
    }
    \label{fig:convergence-main}\vspace{-3mm}
\end{figure}

To bridge the gap between optimization and data, we adopt the framework of \emph{learning dynamics}, which characterizes how a model's confidence in the observed data evolves after a training step on the training sample \cite{ren2024learning}.
Consider a concrete supervised learning model case:
$$
    z = h_\theta(x), \quad f_\theta(x) = \texttt{Softmax}(z) \in \mathbb{R}^V,
$$
where $h(\cdot)$ denotes the neural network backbone, and $V$ represents the vocabulary size or the number of classes. 
To quantify the model's confidence in output $y$ given input $x$ at iteration $t$, we define $\pi_{\theta_t}(y|x)$, or concisely $\pi_t(y|x)$. In a multi-class classification setting, this is expressed as:
$$
    \pi_t(y|x) = \mathbb{1}_y^\top f_\theta(x) = \texttt{Softmax}(h_\theta(x))[y],
$$
where $\mathbb{1}_y$ is the one-hot encoding vector with the $y$-th entry equal to 1.
For the NLP generation problem, $x$ and $y$ represent a sequence of wordings. We can reformulate the model belief as
$
  \pi_t(y|x) = \prod_{l=1}^L\mathbb{1}_{y_l}^\top f_\theta(x, y_{<l}),
$
where $y_{<l}$ is the model's previous output sequence smaller than $l$. For clarity, this paper focuses on the multi-class classification case.

Now we are ready to present the learning dynamics for an observed data point $\xo$, which is defined as the change in log-probability: 
\begin{align}
    \Delta \log \pi_t (y|\xo) := \log \pi_{\theta_{t+1}} (y|\xo) - \log \pi_{\theta_t}(y|\xo). 
\end{align}
Suppose we would like to study how the model's prediction on $\xo$ changes after one-step update of ZO-SGD using $\xu$. 
Applying a FO Taylor expansion yields: 
\begin{align}
    & \Delta \log \pi_{t} (y|\xo) \nonumber\\
    = & \log \pi_t (y|\xo) + \langle \nabla_\theta \log \pi_t (y|\xo) , \theta_{t+1} - \theta_t \rangle \nonumber\\
    & + \mathcal{O}(\|\theta_{t+1} - \theta_t \|^2) - \log \pi_t(y|\xo).
\end{align}
For a sufficiently small learning rate, introducing the ZO-SGD update gives\footnote{In some literature, $\nabla_\theta z(\xu)$ is denoted as a $V\times d$ matrix. 
Yet, in this paper, we choose the $d\times V$ one due to the later inner product interpolation instead of the outer product one.}
\begin{align}
    & \langle \nabla_\theta \log \pi_t (y|\xo) , \theta_{t+1} - \theta_t \rangle \nonumber\\
    \approx & -\! \eta [\nabla_\theta \log \pi_t (y|\xo)] u_t u_t\tran \nabla_\theta \cL(f_\theta(\xu), \yu) \nonumber\\
    = & -\! \eta \underbrace{\nabla_z \log \pi_t (y|\xo)}_{V\times V} [\underbrace{\nabla_\theta z(\xo)\tran}_{V \times d}\! \underbrace{u_t u_t\tran}_{d \times d}\! \underbrace{\nabla_\theta z(\xu)}_{d \times V}] \underbrace{\nabla_z \cL(z, \yu)}_{V \times 1} \nonumber \\
    = & -\! \eta \mathcal{A}_t (\xo) \mathcal{K}_t(\xo, \xu; u_t) \mathcal{G}_t (\xu, \yu), \label{eq.ybc}
\end{align}
where $\mathcal{A}_t(\xo) = I - \mathbb{1}\pi_{\theta_t}(\xo)\tran$  relies only on the model's prediction on the observing data $\xo$, and $\mathcal{G}_t$ is the last layer's gradient of the previous updating data point. 
The central quantity in Eq.~\eqref{eq.ybc} is $\mathcal{K}_t(\xo, \xu; u_t)$, which can be interpreted as an eNTK of the logits $z$, projected onto the random perturbation $u_t$\footnote{To simplify the exposition in \eqref{eq.ybc}, we disregard the $\mathcal{O}(\mu\eta)$ smoothing error introduced by the finite difference scheme. This simplification is justified in practice since $\mu$ is generally a small fixed constant. From a theoretical perspective, one can always select a sufficiently small bound to ensure this term is negligible; alternatively, the Gaussian randomized smoothing approach can be employed for a more refined treatment \citep[see Section 2 of][]{nesterov2017random}.}
Comparing \eqref{eq.ybc} with the FO learning dynamics kernel $\mathcal{K}_t(\xo, \xu) := \nabla_{\theta} z(\xo)\tran \nabla_\theta z(\xu)$ studied in \citep{jacot2018neural, ren2024learning}, we observe that the key distinction between ZO and FO SGD lies in the presence of the random perturbation $u_t$.

This perspective naturally motivates a kernel-based formulation of ZO optimization via $\mathcal{K}_t(\xo, \xu; u_t)$, enabling a systematic analysis of how different design choices affect learning behavior. For clarity, we begin by analyzing the single-perturbation case. However, since practical applications typically employ multi-perturbation strategies, we will subsequently extend our analysis to that setting. Before delving into the theoretical rigor of the general multi-perturbation case, we first examine empirical results to gain intuitive insights from the convergence trajectories.

\begin{figure*}[t]
    \centering
    \includegraphics[width=1\linewidth]{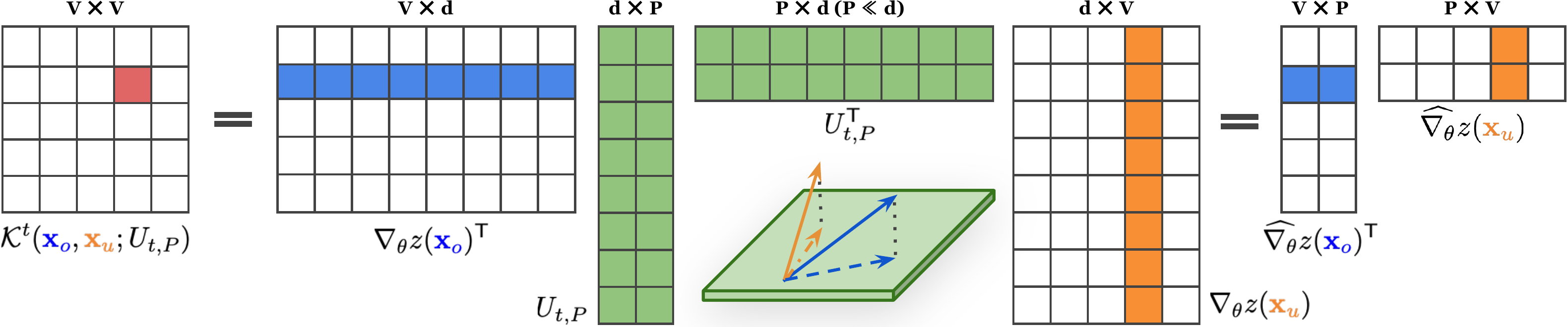}
    \caption{Calculating and Interpolating of One Element of Projected Empirical Neural Tangent Kernel.}
    \vspace{-3mm}
    \label{fig:project-entk}
\end{figure*}

\vspace{-2mm}
\subsection{Example of Learning Dynamics: Zeroth-Order v.s. First-Order SGD.} 
\vspace{-1mm}

To empirically evaluate the approximation fidelity of the ZO eNTK, we conduct experiments using the LeNet model ($d=29,624$) on the MNIST \cite{lecun1998mnist} dataset. 
From Figures~\ref{fig:aa} and \ref{fig:convergence-main}, we observe a consistent, monotonic reduction in approximation error as the number of perturbations $P$ increases. 
Figure~\ref{fig:aa} shows the progressive denoising of the ZO estimate because, as $P$ increases from 1 to 125, the kernel begins to recover the distinct block-diagonal structure of the FO eNTK, indicating improved geometric alignment. 
Moreover, this convergence is corroborated by the Frobenius error trajectories shown in Figure~\ref{fig:convergence-main}. 
A pivotal finding from this analysis is the significant variance in convergence rates governed by input similarity: 

(1) \textbf{High Similarity} (e.g., $\xu=4, \xo=9$): 
Input pairs exhibiting highly semantic similarity show rapid error decay. 
As visibly evident in the first row of Figure~\ref{fig:aa}, the ZO approximation effectively converges to the ground truth. 
At $P=125$, the ZO eNTK visually becomes almost the same as the FO eNTK, achieving a minimal Frobenius norm difference of $\approx 0.338$. 

(2) \textbf{Low Similarity} (e.g., $\xu=0, \xo=1$): 
Structurally distinct pairs suffer from slower convergence and persistent structural discrepancies. 
Figure~\ref{fig:aa} shows, even at $P=125$, the ZO eNTK remains markedly dissimilar to the FO eNTK, failing to fully capture the sharp geometric features of the kernel. 
This is also clearly reflected in Figure~\ref{fig:convergence-main}, where these pairs with low similarity sustain elevated error margins even as perturbations increase. 

Collectively, these results imply that the perturbation sample complexity of ZO gradient estimation is actually highly sensitive to the similarity of $\xu$ and $\xo$: 
the estimator is substantially more perturbation-efficient when evaluating gradients across semantically similar inputs. 
We provide more results in Figures~\ref{fig:1+all}, \ref{fig:ntk_converg_1}, \ref{fig:2+all}, \ref{fig:ntk_converg_2},  \ref{fig:4+all} and \ref{fig:ntk_converg_4} in Appendix~\ref{appendix:exp}. 

\section{Understanding Zeroth-Order Optimization from the Kernel Lens}

In this section, we investigate the properties of the projected eNTK induced by ZO methods in greater depth. 
Specifically, we first extend our analysis \eqref{eq.ybc} to the multi-perturbation case, revealing a basic connection to the Johnson-Lindenstrauss (JL) Lemma \cite{johnson1984extensions}. 
By this insight, we examine how the choice of the perturbation distribution $u_t$ and the number of random perturbations $P$ influence the learning dynamics.

\vspace{-2mm}
\subsection{Theoretical Foundation}
\vspace{-1mm}
To improve the fidelity of gradient approximation, ZO methods usually employ a multi-perturbation strategy. 
Formally, the update rule for multi-perturbation ZO-SGD is:
\begin{align}
    \theta_{t+1}   = & \theta_t - \frac{\eta}{P} \sum_{p=1}^P\left[ \frac{\ell(\theta_{t} + \mu u_{t,p}) - \ell(\theta_t - \mu u_{t,p})}{2 \mu} u_{t,p} \right],  \nonumber
\end{align}
where $P$ denotes the number of random perturbations sampled at each iteration.
By extending the derivation procedure used for the single-perturbation eNTK, we can readily formulate the multi-perturbation projected eNTK as:
\begin{align}
    \mathcal{K}^t(\xo, \xu; U_{t, P}) = \underbrace{\nabla_{\theta} z(\xo)\tran}_{V \times d} \underbrace{U_{t,P} U_{t,P}\tran}_{d \times d} \underbrace{\nabla_{\theta} z(\xu)}_{d \times V},
    \vspace{-1mm}
\end{align}
where $U_{t,P} \in \mathbb{R}^{d \times P}$ represents the random projection matrix induced by the multiple perturbations. Specifically, $U_{t,P}$ is constructed by horizontally stacking the normalized perturbation vectors $\{u_{t, p}/\sqrt{P}\}_{p=1}^P$.
We visually demonstrate the calculation of projected eNTK $\mathcal{K}(\cdot; U_{t,P})$ in Figure~\ref{fig:project-entk}. 

\textbf{Difference between FO and ZO eNTK Update:} 
Direct analysis of the angle between the ZO gradient estimator $\frac{\ell(\theta_t + \mu u_t) - \ell(\theta_t - \mu u_t)}{2 \mu} u_t$ and the exact FO gradient $\nabla \ell (\theta_t)$ reveals that they are nearly orthogonal in high-dimensional spaces when $u_t$ is drawn from a Gaussian or similar isotropic distribution \citep{vershynin2018high}.
However, practical machine learning applications prioritize model output confidence over raw parameter alignment. By characterizing the discrepancy in learning dynamics between FO and ZO methods, we derive the following relationship:
\vspace{-1mm}
\begin{align}
    & \Delta_{\mathrm{fo}} \log \pi_{t+1} (y|\xo) - \Delta_{\mathrm{zo}} \log \pi_{t+1} (y|\xo) \nonumber\\
    \approx & - \eta \mathcal{A}_t (\xo) \underbrace{\left[ \mathcal{K}_t(\xo, \xu) - \mathcal{K}_t(\xo, \xu;U_{t,P}) \right]}_{\Delta \mathcal{K}: \textit{Kernel Approximation Error}} \mathcal{G}_t (\xu, \yu). \raisetag{12pt} 
    \label{eq.ll}
    \vspace{-3mm}
\end{align}
The above quantifies the update discrepancy by mapping it directly to the deviation between the standard eNTK, $\mathcal{K}_t(\cdot)$, and the ZO-induced projected kernel, $\mathcal{K}_t(\cdot;U_{t,P})$. This formulation implies that the fidelity of the ZO update fundamentally depends on how well the random projection matrix $U_{t,P}$ preserves the geometry of the original kernel space. Unlike the raw log-probability difference, the term $\Delta \mathcal{K}$ provides a more structured and tractable metric. Consequently, we utilize this eNTK difference to analyze how the choice of sampling distribution and the number of perturbations influence ZO optimization.

We observe that the $(i,j)$-th entry of the kernel discrepancy matrix $\Delta \mathcal{K}$ takes the following form:
\begin{align}
    \Delta \mathcal{K}[i,j] = & \nabla_\theta z_i(\xo)^\top \nabla_\theta z_j(\xu) \nonumber \\
    & - \nabla_\theta z_i(\xo)^\top U_{t,P}U_{t,P}^\top \nabla_\theta z_j(\xu),
\end{align}
where $\nabla_\theta z_i(\xo) \in \mathbb{R}^d$ denotes the gradient with respect to the $i$-th output component (logit). 
This vector corresponds to the $i$-th column of the Jacobian matrix $\nabla_\theta z(\xo)$. 
We define the projected gradient as $\widehat{\nabla_\theta} z_j(\xu) := U_{t,P}^\top \nabla_\theta z_j(\xu) \in \mathbb{R}^P$ that represents a linear projection of the original gradient from $\mathbb{R}^d$ into a lower-dimensional subspace $\mathbb{R}^P$. 
Consequently, the kernel discrepancy matrix can be reformulated to explicitly highlight the difference in inner products between the original and projected spaces:
\vspace{-1mm}
\begin{align*}
    \Delta \mathcal{K}[i,j] \!=\! \langle \nabla_\theta z_i(\xo), \nabla_\theta z_j(\xu) \rangle \!-\!  \langle \widehat{\nabla_\theta} z_i(\xo), \widehat{\nabla_\theta} z_j(\xu) \rangle. 
\vspace{-1mm}
\end{align*}
The discrepancy derived above is intimately related to the Johnson-Lindenstrauss (JL) Lemma \cite{johnson1984extensions, dasgupta2003elementary}.
Although classically formulated in terms of distance preservation, the JL Lemma naturally extends to inner products via the polarization identity.\vspace{-1mm}

\begin{mybox}
\begin{lemma}[\textbf{Johnson-Lindenstrauss Lemma - Inner Product Version}]
\label{lemma:jl}
    For any set $W$ of $n$ points in a high-dimensional space $\real^d$ and any given arbitrarily small error $\epsilon\in(0,1)$, there exists a linear mapping $f: \real^d \to \real^P$, where the much lower dimension $P = \cO(\ln n/\epsilon^2)$, such that the inner product of two points in set $V$ remains similar after applying $f$: $\forall \omega_i, \omega_j \in W$, 
    \vspace{-1.5mm}
    \begin{align}
        |\langle f(\omega_i), f(\omega_j)\rangle - \langle \omega_i, \omega_j \rangle| \leq \frac{\epsilon}{2}(\|\omega_i\|^2 + \|\omega_j\|^2). \nonumber
    \end{align}
    When $\|\omega_i\| = \|\omega_j\|=1$, a more concise expression is\vspace{-1.5mm}
    \begin{align}
        |\langle f(\omega_i), f(\omega_j)\rangle - \langle \omega_i, \omega_j \rangle| \leq \epsilon. \nonumber
    \end{align}
The proof of this lemma is provided in Appendix~\ref{sec:proof of jl lemma}. 
\end{lemma}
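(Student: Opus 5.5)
The plan is to derive the inner-product statement from the classical distance-preserving JL Lemma via the polarization identity, applied to an augmented point set. We take as given the standard JL Lemma (e.g., in the form of \citet{dasgupta2003elementary}): for any finite set $S\subset\real^d$ of $m$ points and any $\epsilon\in(0,1)$, there is a linear map $f:\real^d\to\real^P$ with $P=\cO(\ln m/\epsilon^2)$ such that
\begin{align}
(1-\epsilon)\|x-y\|^2 \le \|f(x)-f(y)\|^2 \le (1+\epsilon)\|x-y\|^2, \qquad \forall x,y\in S. \nonumber
\end{align}
A random Gaussian (or Rademacher) matrix scaled by $1/\sqrt{P}$ works with positive probability. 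This matches the construction of $U_{t,P}$ used later in the paper.

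\textbf{Augmented set.} Apply the classical lemma to $S = W\cup(-W)\cup\{0\}$, which has $m\le 2n+1$ points, so $\ln m = \cO(\ln n)$ and $P=\cO(\ln n/\epsilon^2)$ is unchanged up to constants. Taking the pair $(\omega_i,-\omega_j)$ gives $\|f(\omega_i)+f(\omega_j)\|^2 \in (1\pm\epsilon)\|\omega_i+\omega_j\|^2$, by linearity of $f$. Taking the pair $(\omega_i,\omega_j)$ gives $\|f(\omega_i)-f(\omega_j)\|^2\in(1\pm\epsilon)\|\omega_i-\omega_j\|^2$. Pairs with $0$ give $\|f(\omega)\|^2\in(1\pm\epsilon)\|\omega\|^2$, although these are not strictly needed.

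\textbf{Polarization.} Use $\langle a,b\rangle = \tfrac14(\|a+b\|^2-\|a-b\|^2)$, both for $a=f(\omega_i)$, $b=f(\omega_j)$ and for the original vectors. Subtracting, the error is at most
\begin{align}
\tfrac14\big(\epsilon\|\omega_i+\omega_j\|^2 + \epsilon\|\omega_i-\omega_j\|^2\big) = \tfrac{\epsilon}{4}\cdot 2(\|\omega_i\|^2+\|\omega_j\|^2) = \tfrac{\epsilon}{2}(\|\omega_i\|^2+\|\omega_j\|^2), \nonumber
\end{align}
where the middle step is the parallelogram law. This is exactly the claimed bound. The unit-norm specialization follows immediately, since then $\tfrac{\epsilon}{2}\cdot 2=\epsilon$.

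\textbf{Main obstacle.} The only delicate point is bookkeeping: we must make sure a single map $f$ simultaneously controls both $\|f(\omega_i)\pm f(\omega_j)\|$. That is why we include $-W$ in the set and invoke linearity. We must also check that enlarging the set from $n$ to about $2n$ points only changes the constant in $P$. If we wanted a self-contained argument instead of citing the classical lemma, the hard step would be its concentration inequality: a $\chi^2$-type tail bound $\Pr(|\|Ax\|^2-\|x\|^2|>\epsilon\|x\|^2)\le 2e^{-cP\epsilon^2}$ for a fixed $x$. This follows from the moment generating function of Gaussian (or sub-Gaussian, covering Rademacher) entries, and a union bound over the $\cO(n^2)$ pairs then yields $P=\cO(\ln n/\epsilon^2)$.
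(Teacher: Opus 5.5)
Your proof is correct and follows essentially the same route as the paper. Both use the polarization identity to reduce inner-product preservation to norm preservation of $\omega_i\pm\omega_j$, invoke the classical JL Lemma to get $P=\cO(\ln n/\epsilon^2)$, and use the parallelogram law to obtain $\frac{\epsilon}{2}(\|\omega_i\|^2+\|\omega_j\|^2)$. The only difference is packaging: the paper applies the norm form of JL directly to the at most $2n^2+n$ vectors $\omega_i\pm\omega_j$, whereas you apply the distance form to $W\cup(-W)\cup\{0\}$ and use linearity, which rests on the same union bound over $\cO(n^2)$ vectors.
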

\end{mybox}
\vspace{-1mm}

A fundamental and surprising consequence of the JL Lemma is that \textbf{the sufficient projection dimension $P$ is independent of the original feature dimension $d$, depending solely on the number of points $n$ and the error tolerance $\epsilon$.} 
In the context of our ZO and FO comparison, this implies that once the number of perturbations $P$ exceeds a threshold determined by the vocabulary size $V$, the kernel discrepancy remains tightly bounded, regardless of how large the model dimension $d$ becomes. 
Consequently, a large enough $P$ ensures that the projected kernel $\mathcal{K}_t(\cdot;U_{t,P})$ converges to the true eNTK, thereby aligning the ZO learning trajectory with the ideal FO dynamics and theoretically validating that multi-perturbation strategies minimize the variance of the model's confidence updates.

Quantitatively, the approximation error $\epsilon$ decays at an inverse square root rate with respect to the number of perturbations, specifically $\epsilon = \mathcal{O}(\sqrt{\ln V / P})$. 
This theoretical scaling aligns with by convergence curve illustrated in Figure~\ref{fig:convergence-main}. 
Building on this foundation, we next apply the JL framework to analyze two concrete examples, investigating how different projection distributions and the number of perturbations influence the performance of ZO methods.

\vspace{-2mm}
\subsection{Impact of Projection Distributions}
\label{subsec:distribution_impact}
\vspace{-1mm}

The perturbation vector $u$ is typically sampled from simple, symmetric distributions to estimate gradient information \cite{flaxman2004online, nesterov2017random}. We consider two commonly choices in the literature\vspace{-2mm}
\begin{align}
\begin{aligned}
    \textrm{Gaussian: } & u \sim \mathcal{N}(\mathbf{0},\mathbf{I}),  \\
    \textrm{Rademacher: } & u \sim \textrm{Rad}(0.5), 
\end{aligned}
\label{eq.proj.dist}
\end{align}
where Rademacher distribution is just a discrete distribution with equal probability choosing $\pm 1$.
\vspace{-2mm}

\subsubsection{Optimization Point of View}
\vspace{-1mm}
A substantial body of literature~\cite{sadegh2002optimal, maheswaranathan2019guided, gao2022generalizing, bollapragada2024derivative, sawada2025natural, ma2025revisiting, tan2025perturbation} has investigated how different perturbation distributions impact the performance of ZO methods. 
These studies primarily focus on optimization or statistical efficiency, such as variance reduction or Mean-Squared Error (MSE) minimization \cite{gao2022generalizing}.
While a comprehensive analysis of distribution effects in MSE perspective is beyond the scope of this paper, we provide an illustrative derivation of the gradient estimator's second moment, $\mathbb{E} \|\langle \nabla \ell(\theta), u\rangle u\|^2$, for self-containment. 
This quantity is a critical proxy for convergence speed. 
For simplicity, we restrict our analysis to the single-perturbation case; note that for multiple perturbations, this error term roughly scales inversely with $P$ due to the independence of samples.
In the standard Gaussian case, the expected squared norm of the estimator is derived as follows:
\begin{align}
\Ex \! \left\|\langle \nabla \ell(\theta), u\rangle u\right\|^2&= \text{Tr} \left( \Ex u u\tran \nabla \ell(\theta) \nabla \ell(\theta)\tran u u\tran \right) \nonumber\\[-1mm]
&\stackrel{(a)}{=} (d+2)\|\nabla \ell(\theta)\|^2.
\end{align}
We provide the detailed algebraic step (a) above, which relies on the fourth-order moments of the Gaussian distribution, in Appendix~\ref{app.zo.optimization}. 
In the Rademacher distribution case, we have something similar: 
\begin{align}
    \Ex \!\|\langle \nabla \ell(\theta), u\rangle u\|^2
    = & \Ex \langle \nabla \ell(\theta), u\rangle ^2\|u\|^2 \nonumber\\[-1mm]
    \stackrel{(a)}{=} & d \sum_{i, j} \nabla_i \ell(\theta)\nabla_j \ell(\theta) \mathbb{E}_{i,j} u_i u_j\nonumber\\[-1mm]
    =& d\|\nabla \ell(\theta)\|^2,
\end{align}
where $\nabla_i \ell(\theta)$ denotes the $i$-th component of the gradient $\nabla \ell(\theta)$. Step (a) leverages the property that for Rademacher vectors, the squared norm $\|u\|^2 = d$ holds deterministically, which significantly simplifies the derivation compared to the Gaussian case. Finally, the derivation relies on the independence of perturbation entries $\mathbb{E}[u_i u_j] = \delta_{i,j}$.

Above analysis is far from the complete optimization proof. However, it still can shed some intuitions. From this perspective, the Rademacher distribution yields a strictly lower second moment than the Gaussian distribution, though the relative performance gap diminishes asymptotically as the dimension $d \to \infty$. Nevertheless, regardless of the distribution chosen, the variance of the zeroth-order estimator scales linearly with $d$, making single-perturbation estimators highly inefficient when the model dimension $d$ is vast -- a common belief in the optimization point of view.

\subsubsection{eNTK Point of View}
The universality of the JL Lemma stems from the concentration of measure phenomenon in high-dimensional spaces. 
While the generic bound suggests $P = \mathcal{O}(\ln n / \epsilon^2)$, the hidden constant factors are governed by the tail properties of the distribution used to populate the projection matrix $U_{t,P}$. Hence, we can use this to study the impact of distribution.

Formally, for a unit vector $x \in \mathbb{R}^d$ and a scaled projection map $f(x) = U_{t,P}\tran x$, the validity of the JL Lemma relies on a tail bound of the form:
$
    \mathbb{P}\left( \left| \|f(x)\|^2 - 1 \right| \ge \epsilon \right) \le 2 \exp\left( -c(\mathcal{Q}) \cdot P \cdot \epsilon^2 \right),
    \label{eq:concentration_general}
$
where $c(\mathcal{Q})$ is the \textit{concentration constant} specific to the distribution $\mathcal{Q}$. To satisfy the union bound over $n$ points with probability $1-\delta$, the required projection dimension is:
\begin{align}
    P \ge \frac{2 \ln(n) + \ln(1/\delta)}{c(\mathcal{Q}) \cdot \epsilon^2}.
\end{align}
A larger concentration constant $c(\mathcal{Q})$ implies a more efficient projection, requiring fewer perturbations $P$ to achieve the same error tolerance $\epsilon$. 

\begin{figure*}[t]
    \centering
    \includegraphics[width=0.93\linewidth]{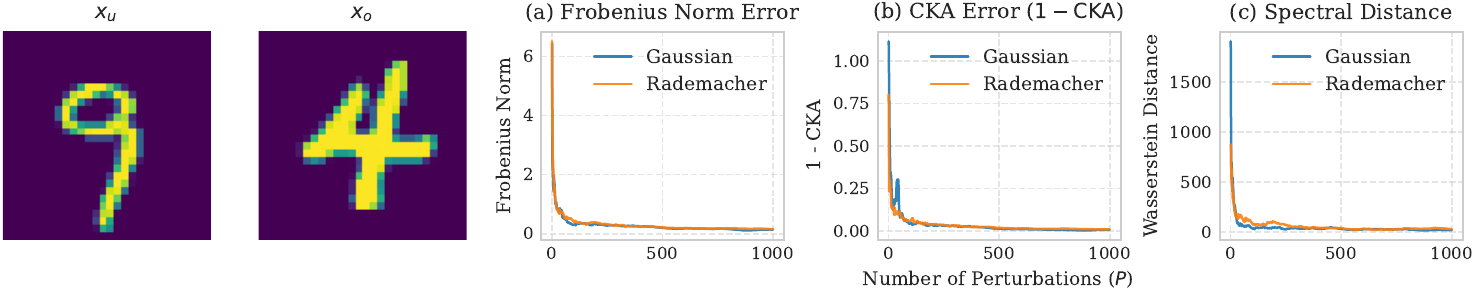}
    \includegraphics[width=0.93\linewidth]{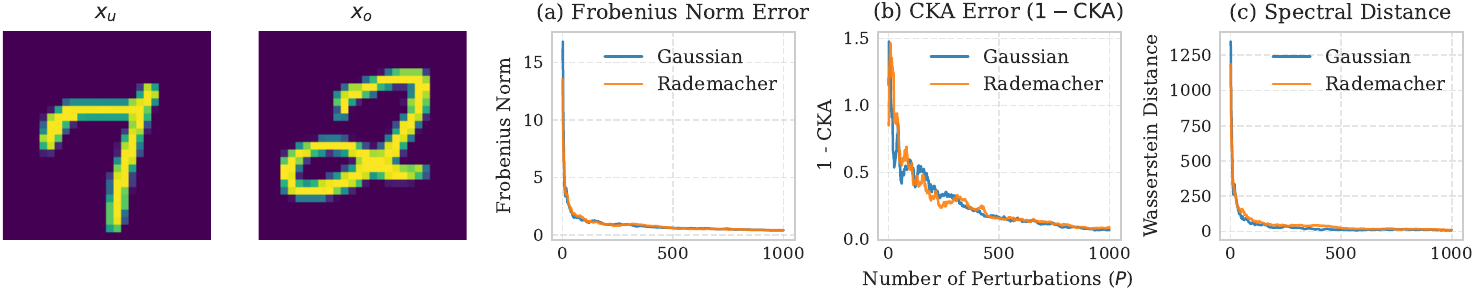}\vspace{-1mm}
    \caption{Gaussian v.s. Rademacher Distributions (LeNet + MNIST)}
    \label{fig:eNTK_dist}\vspace{-2mm}
\end{figure*}

\textbf{Gaussian Distribution}:
The squared norm of the projected vector, $Y = \|U_{t,P} x\|^2$, follows a Chi-Squared distribution with $P$ degrees of freedom ($\chi^2_p$). Applying the Chernoff-Cramér bound, we obtain:
\begin{align}
    \mathbb{P}(Y \ge p(1+\epsilon)) \leq & \exp\left( -\frac{p}{2} \big(\epsilon - \ln(1+\epsilon)\big) \right).
\end{align}
Using taylor expansion on the $\ln$ term that $\ln(1+\epsilon) = \epsilon - \frac{\epsilon}{2} + \cO(\epsilon^3)$, we obtain $c(\cN) \approx 1/4$.
We can get a similar result for the lower bound $\mathbb{P}(Y \le p(1-\epsilon))$.

\textbf{Rademacher Distribution}:
For a fixed unit vector $x$, the projection component $y_k = \sum_{j=1}^d u_{kj} x_j$ is a weighted sum of bounded independent variables. By Hoeffding's inequality, the sub-Gaussian proxy variance is minimal. As established by \citet{achlioptas2003database}, the strict boundedness leads to sharper concentration behavior:
\begin{align}
    \mathbb{P}\left( \left| \|f(x)\|^2 - 1 \right| \ge \epsilon \right) \le 2\exp\left( -\frac{p \epsilon^2}{4} + \frac{p\epsilon^3}{6} \right). \nonumber
\end{align}
This derivation yields a concentration constant $c(\cR) \approx 1/4$, which is comparable to the standard Gaussian baseline. While this parity aligns with our earlier optimization-based analysis, a key distinction emerges: unlike the variance of the gradient estimator, the concentration bound here is independent of the model dimension $d$. We will further elucidate the implications of this dimension-independent scaling in the subsequent subsection.

This echoes the observation by \citet{achlioptas2003database} that simple discrete projections can achieve performance comparable to complex continuous distributions, which is corroborated empirically in our observation in Figure~\ref{fig:eNTK_dist}. In Figure~\ref{fig:eNTK_dist}, we show the comparison of the continuous Gaussian distribution against the discrete Rademacher distribution. 
To rigorously quantify the approximation quality, the analysis utilizes three complementary metrics: 
(1) Frobenius Norm Error, which measures the element-wise Euclidean distance between the ZO and FO eNTK matrices; 
(2) Centered Kernel Alignment (CKA) Error, which evaluates the structural dissimilarity between the representational geometries; and 
(3) Spectral Distance (Wasserstein distance), which assesses the discrepancy between the eigenvalue distributions of the two kernels. 
The results demonstrate that as the number of perturbations $P$ increases, both distributions yield a consistent and monotonic reduction in approximation error across all three metrics. 
A pivotal observation from these trajectories is the distribution robustness of the ZO learning dynamics, as the convergence curves for Gaussian and Rademacher sampling are nearly indistinguishable. 
This empirical parity is supported by the kernel-based framework and the Johnson-Lindenstrauss (JL) Lemma, which suggest that the concentration constants for both distributions are comparable. 
Ultimately, Figure~\ref{fig:eNTK_dist} confirms that the fidelity of the ZO updates is primarily governed by the perturbation count $P$ rather than the specific sampling distribution.

\vspace{-2mm}
\subsection{The Impact of The Number of Perturbations}
\vspace{-1mm}
Besides the sampling distribution, the number of perturbations is another key hyper-parameter in the ZO method.
Existing studies \cite{ghadimi_stochastic_2013, liu2018zeroth, liu2020primer, zhang24ad, li2025achieving, peng2025muzo, li2025reconciling, chen2025vamo} have established that increasing the number of perturbations in ZO optimization can improve gradient approximation and accelerate convergence from an optimization-theoretic perspective. 
These works typically characterize the benefit of using multiple perturbations through variance reduction or improved convergence rates, and thus provide theoretical justification for employing larger perturbation budgets during training. 
However, a fundamental question remains largely unexplored: \textit{how many perturbations are sufficient to achieve learning dynamics comparable to first-order optimization?} 
In other words, beyond asymptotic convergence guarantees, it is unclear at what point the ZO updates become a sufficiently accurate approximation of their FO counterparts in practice.
In this subsection, we address this question through a kernel-based analysis. 
By examining how the number of perturbations affects the eNTK induced by ZO updates, we provide a principled criterion for determining when the resulting learning dynamics closely match those of FO optimization. 

\subsubsection{Optimization Point of View}

From the optimization point of view, we just need to examine the ZO-SGD update \eqref{eq.zo.sgd} and the properties of the loss function $\ell$.

Suppose the loss function $\ell(\theta)$ is $L$-smooth and the gradient associated with stochastic function value oracle is unbiased with variance bounded by $\sigma^2$. When the learning rate satisfies $\eta \leq \frac{P}{Ld}$, the ZO-SGD algorithm satisfies the following one-step descent bound:
\begin{align}
    \Ex\|\nabla \ell(\theta_t)\|^2 \leq \frac{\Ex \ell(\theta_{t}) - \Ex \ell(\theta_{t+1})}{\eta} + \frac{\eta d L}{2 P }\sigma^2 +  O(\eta \mu) \nonumber
\end{align}
For a sufficiently large number of iterations $T$, setting the learning rate $\eta = \mathcal{O}\left(\sqrt{\frac{P}{dLT}}\right)$ yields the convergence rate:
\begin{align}
\frac{1}{T}\sum_{t=0}^{T-1} \Ex\|\nabla \ell(\theta_t)\|^2 = \mathcal{O}\left(\sqrt{\frac{dL}{PT}}\right).
\end{align}
The detailed proof is provided in Appendix~\ref{app.zo.optimization}. We highlight two key implications of this convergence rate below. First, the convergence rate improves with the number of perturbations, scaling as $\mathcal{O}(1/\sqrt{P})$, which aligns with what we predicted from JL lemma. Second, the convergence rate scales with the ambient model dimension $d$. Since $d$ is typically vast in modern deep learning models, this dependency underpins the conventional wisdom that zeroth-order methods are significantly slower and less sample-efficient than FO methods in high-dimensional settings.

\begin{figure*}[t]
    \centering
    \includegraphics[width=1.0\linewidth]{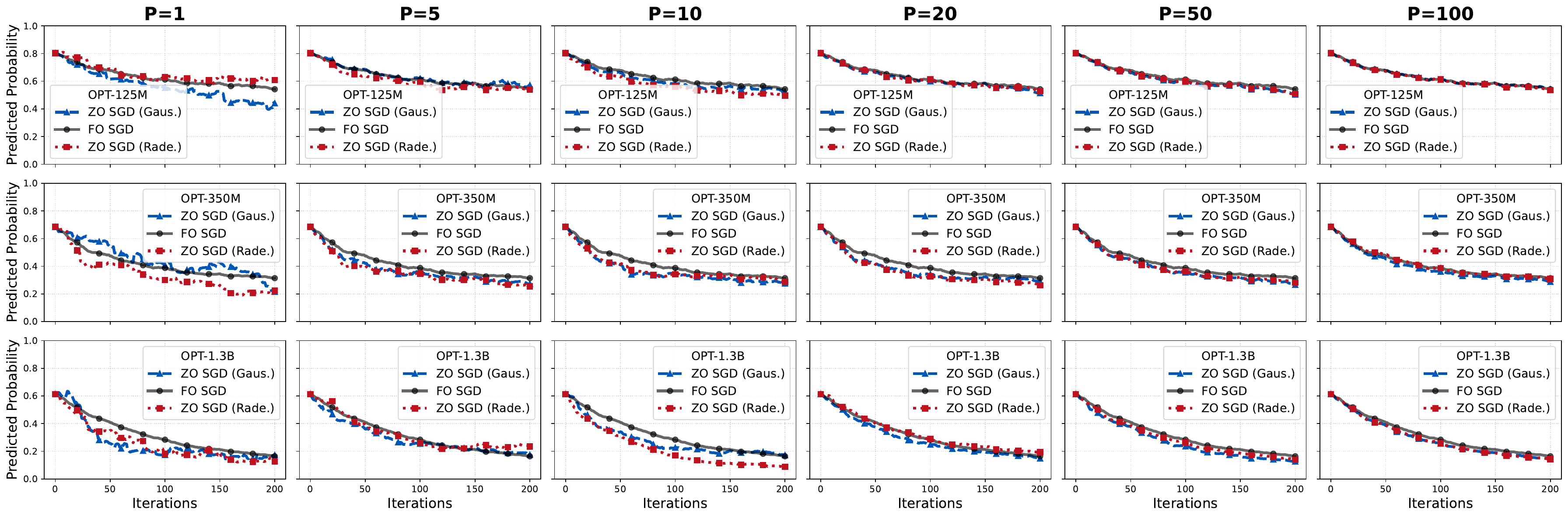}\vspace{-1mm}
    \caption{ZO Trajectory Comparison between OPT-125M to OPT-1.3B model on SST-2 Task over Different Perturbations.}
    \label{fig:llm_exp}\vspace{-1mm}
\end{figure*}

Recently, \citet{malladi2023kernel} proposed the low-effective rank assumption to improve the convergence rate.
This hypothesis builds on prior observations that the Hessian spectrum of a well-trained model is dominated by a few significant eigenvalues~\cite{papyan2020traces, yao2020pyhessian}. However, for LLMs, explicitly computing the effective rank is computationally prohibitive. 
Consequently, it remains difficult to empirically calculate or verify if the low effective rank is truly independent of the model dimension. See more details in Appendix \ref{app.zo.low_rank}

\vspace{-1mm}
\subsubsection{eNTK Point of View}
\vspace{-1mm}

We proceed to establish a theoretical upper bound for the discrepancy in model belief updates between the FO and ZO methods. 
Starting from the operator norm inequality, we have:
\begin{align}
    \|\Delta_{\mathrm{fo}} \log \pi_{t+1} (y|\xo) - \Delta_{\mathrm{zo}} \log \pi_{t+1} (y|\xo)\|_2 \nonumber \\
    \leq \eta \|A\|_2 \|\Delta \mathcal{K}\|_F \|\mathcal{G}_t (\xu, \yu)\|_2. \label{eq:initial_bound}
\end{align}
The critical term is the Frobenius norm of the kernel discrepancy matrix, $\|\Delta \mathcal{K}\|_F$. We bound this term by leveraging the inner-product preservation property of the Johnson-Lindenstrauss Lemma~\ref{lemma:jl}. Substituting it into the definition of the squared Frobenius norm, we obtain:\vspace{-2mm}
\begin{align}
    \|\Delta \mathcal{K}\|_F^2 
    &\leq \frac{\epsilon^2}{4} \sum_{i=1}^V \sum_{j=1}^V \left( \|\nabla_\theta z_i(\xo)\|_2^2 + \|\nabla_\theta z_j(\xu)\|_2^2 \right)^2 \nonumber\\
    &\stackrel{(a)}{\leq} \frac{\epsilon^2}{2} \sum_{i=1}^V \sum_{j=1}^V \left(\|\nabla_\theta z_i(\xo)\|_2^4 + \|\nabla_\theta z_j(\xu)\|_2^4 \right)\nonumber\\
    &\stackrel{(b)}{\leq} \frac{\epsilon^2 V}{2} \left( \|\nabla_\theta z(\xo)\|_F^4 + \|\nabla_\theta z(\xu)\|_F^4 \right) \nonumber \\
    &\leq \frac{\epsilon^2 V}{2} \left( \|\nabla_\theta z(\xo)\|_F^2 + \|\nabla_\theta z(\xu)\|_F^2 \right)^2.
\end{align}
In step (a), we apply Jensen's inequality $(a+b)^2 \leq 2a^2 + 2b^2$. In step (b), we utilize the property that the sum of squares is bounded by the square of the sum for non-negative terms (i.e., $\sum x_i^2 \le (\sum x_i)^2$), allowing us to bound the sum of fourth powers by the square of the Frobenius norm.

Denote the Frobenius norms of the Jacobian matrix as $ \Xi_t = \max(\|\nabla_\theta z(\xo)\|^2_F ,\|\nabla_\theta z(\xu)\|^2_F) $; Then we have the following succinct formula
$
    \|\Delta \mathcal{K}\|_F \leq \epsilon \cdot \Xi_t \cdot \sqrt{V}.
$ Substituting this back into Eq.~\eqref{eq:initial_bound} and recalling that $\epsilon \sim \sqrt{(\log V)/P}$, we arrive at the final convergence rate:\vspace{-1mm}
\begin{align}
    \| \text{Diff} \|_2 \lesssim  \underbrace{\sqrt{\frac{V\log V}{P}}}_{\text{JL Scaling}} \cdot \underbrace{\eta \Xi_t  \|\mathcal{G}_t\|_2 \|\cA_t\|_2}_{\text{Model Dynamics}}.
\end{align}
This result indicates that maintaining trajectory fidelity in tasks with large vocabulary sizes $V$ requires the number of perturbations $P$ to scale appropriately with $V$. 
Typically, the learning rate $\eta$ is chosen to be sufficiently small such that the term $\eta\Xi_t \|\mathcal{G}_t\|_2$ remains bounded, thereby preventing algorithmic divergence. Furthermore, it is straightforward to show that the product $\|\cA_t\|_2\leq 1$ is inherently bounded.

Thus, we arrive at a striking conclusion: the ZO method can effectively approximate the FO trajectory provided the number of perturbations is sufficiently large. 
Crucially, this requirement depends on the output dimension $V$ rather than the model dimension $d$. 
This finding offers an alternative explanation for the modest convergence degradation of ZO methods observed empirically: 
{ZO methods can achieve a dimension-independent convergence rate even without the strict low-effective-rank assumption on the Hessian}. 

\vspace{-2mm}
\subsection{Summary and Validation}
\vspace{-1mm}
We conclude this section by synthesizing the insights derived from both the optimization and kernel perspectives and validating it in LLMs tasks. 
While distinct in their mathematical formulations, these two frameworks offer complementary rather than conflicting views on ZO dynamics.

Both perspectives agree on the fundamental role of the number of perturbations $P$. 
Whether minimizing the gradient variance (Optimization view) or bounding the kernel approximation error via the Johnson-Lindenstrauss Lemma (eNTK view), the convergence rate consistently improves with $\mathcal{O}(1/\sqrt{P})$. This confirms that increasing the number of perturbations is a universal strategy for enhancing the fidelity of ZO updates.

However, the crucial distinction lies in how each framework treats dimensionality. The classical optimization perspective typically yields bounds dependent on the massive model dimension $d$, suggesting that ZO methods are prohibitively slow for large-scale models. 
In contrast, our eNTK analysis reveals a more optimistic reality: the learning trajectory's fidelity depends primarily on the output vocabulary size $V$. 
This dimension-free property of the projected kernel offers a rigorous theoretical justification for the empirical success of ZO methods in fine-tuning LLM, where $d$ is vast but $V$ is comparatively moderate.

\textbf{Validation on LLMs (Trajectory versus model size):} 
In Figure~\ref{fig:llm_exp}, we empirically validate this theoretical scaling law by examining the learning trajectories of OPT \cite{zhang2022opt} models ranging from 125M, 350M to 1.3B parameters on the SST-2 \cite{socher2013recursive, wang2018glue} task. 
We observe a consistent pattern: as $P$ increases from $1$ to $100$, the ZO trajectories (for both Gaussian and Rademacher distributions) progressively align with the FO baseline.
Crucially, this alignment behavior exhibits remarkable invariance to model scale.
Despite the model dimension $d$ increasing by an order of magnitude (from 125M to 1.3B), the threshold of $P$ required to recover the FO dynamics remains virtually constant.
For instance, at $P=50$, ZO-SGD effectively mimics the FO trajectory across all model sizes.
This empirical evidence strongly corroborates our kernel-based derivation: the fidelity of ZO learning dynamics is governed by the output dimension $V$ (which remains constant across these models), rather than being diluted by the massive expansion of $d$.

\begin{table}[b]
    \centering
    \caption{ZO eNTK Approximation Error vs. Output Size. Difference norm $k:=\|\Delta \mathcal{K}\|_F$, where $\Delta \mathcal{K}$ is defined in \eqref{eq.ll} and FO Norm $f = \|\mathcal{K}\|_F$ as introduced in Section \ref{sec:prelim}.\vspace{-1mm}} 
    \label{table1}
    \resizebox{1\linewidth}{!}{
    \begin{tabular}{c|c|c|c}
    \toprule
    Output Size $V$ & Difference Norm $k$ & FO Norm $f$& Relative Error \\
    \midrule
    2 & 21.2548 & 76.8934 & 0.2764 \\
    10 & 81.5816 & 184.9210 & 0.4412 \\
    100 & 957.8461 & 653.4963 & 1.4657 \\
    500 & 5083.9067 & 1706.7668 & 2.9787 \\
    1000 & 6830.7114 & 1849.7047 & 3.6929 \\
    \bottomrule
    \end{tabular}
    }
\end{table}

\textbf{Validation on LLMs (Difference of trajectory versus output dimensions):}
To explicitly validate our theoretical bound, specifically that ZO fidelity degrades at a rate of $\mathcal{O}(\sqrt{V \log V/P})$ without the confounding architectural variables of an LLM, we set up a controlled synthetic experiment. We constructed a two-layer MLP with a fixed hidden dimension and a fixed number of perturbations ($P=50$). 
We scaled $V$ from 2 to 1000 and measured the Relative Frobenius Error between FO eNTK and ZO eNTK. As shown in Table~\ref{table1}, the approximation error increases at a rate consistent with $\sqrt{V}$, exactly as predicted by our theorem.

\section{Trade-Offs, Limitations, and Future Work}

As an explanatory study, this work primarily investigates the theoretical and empirical roles of the perturbation budget $P$ in ZO optimization under the standard empirical risk minimization (ERM) setting. 
Our analysis is grounded in supervised fine-tuning (SFT) tasks within the lazy training regime, where model features remain approximately stable and the empirical NTK varies slowly. 
Within this scope, our kernel-based analysis confirms that increasing $P$ enhances trajectory fidelity. 
However, this improvement comes with significant engineering trade-offs and domain-specific limitations that warrant further discussion.

\begin{itemize}[
    leftmargin=1em,
    rightmargin=1em,
    topsep=1pt,
    itemsep=-1pt
]

    \item \textbf{Computational Trade-Offs and Parallelization.} 
    Implementing multiple perturbations introduces a linear increase in function query costs, which can be prohibitive for large-scale models. 
    A natural question arises: 
    \textit{can we parallelize these multiple perturbations akin to mini-batch processing to amortize the cost?} 
    While conceptually similar to data parallelism, parallelizing ZO perturbations is non-trivial due to the memory constraints of GPU accelerators. Unlike standard gradient accumulation, evaluating $P$ perturbed forward passes in parallel effectively multiplies the batch size by $P$. 
    For LLMs that already operate near the limit of device memory, this naive parallelization creates a severe bottleneck, risking out-of-memory errors. 
    Future work is needed to investigate efficient activation-free parallelization strategies or hybrid schemes that balance query fidelity with memory overhead.

    \item \textbf{Sequential Dependencies and Task Domains.} 
    Our current eNTK framework is tailored to standard SFT tasks such as classification, sentiment analysis, and short-form generation (e.g., prompt-based tuning with verbalizers), where the effective output dimension is small and the kernel remains stable. 
    However, it does not cover tasks with strong sequential dependencies, such as long-horizon chain-of-thought reasoning. 
    In such autoregressive settings, outputs at step $t$ influence inputs at step $t+1$, leading to compounding distribution shifts that are not captured by our one-step kernel analysis. 
    Moreover, the effective output dimension grows with sequence length, which may require a significantly larger perturbation budget or a fundamentally different analysis. 
    Extending the projected kernel perspective to capture these temporal dependencies remains an important direction for future work.

    \item \textbf{Dynamics of Pre-training vs. Fine-tuning.} 
    Our analysis relies on the relative stability of the eNTK, which typically holds in fine-tuning scenarios where parameters remain close to their initialization. 
    This assumption enables the random projection induced by ZO to approximate the gradient geometry effectively. 
    However, it breaks down in training-from-scratch settings, where feature learning dominates and the kernel evolves rapidly. 
    Under such non-stationary dynamics, the random subspace projection may fail to track the rapidly changing true gradient direction. 
    This provides a theoretical explanation for the limited success of ZO methods in large-scale pre-training, suggesting that ZO is particularly well-suited for the stable, low-rank adaptation regime characteristic of SFT.

\end{itemize}

\section{Conclusion}
In this work, we present a novel framework analyzing ZO optimization through the lens of learning dynamics and the eNTK. 
While classical optimization theory provides essential convergence bounds, our approach offers a complementary perspective that reveals the structural properties of ZO dynamics.
By characterizing ZO training as a geometric projection process, we leverage the JL Lemma to prove that the trajectory approximation error is bounded by the number of perturbations $P$ and output vocabulary size $V$, independent of the model dimension $d$. 
This dimension-free property provides an alternative justification for the scalability of ZO methods to LLM. 
We believe these insights offer a principled foundation for understanding and improving derivative-free fine-tuning in high-dimensional settings.

\section*{Acknowledgements}
This work is supported in part by RIT CHAI Faculty Seed Grant, NIH award R16GM159671 and NSF grant CNS-2112471. 
The content is solely the responsibility of the authors and does not necessarily represent the official views of the funding agencies.

\section*{Impact Statement}

This paper focuses on the theoretical analysis of the learning dynamics of zeroth-order optimization. 
As a foundational study aimed at advancing machine learning theory, we do not believe that there are specific negative societal consequences that must be highlighted here. 

\bibliography{references}
\bibliographystyle{icml2026}

\newpage
\appendix
\onecolumn
\section*{\huge Appendix}
\startcontents[appendices]
\printcontents[appendices]{}{0}{}

\section{More Related Works}

\textbf{Neural Tangent Kernel (NTK):}
The NTK framework, originally introduced to characterize the training dynamics of over-parameterized neural networks, establishes that in the infinite-width limit, gradient descent optimization is equivalent to kernel regression with a static kernel \cite{jacot2018neural}. 
While this lazy training regime provides strong theoretical guarantees \cite{arora2019exact}, subsequent research has focused on the finite-width setting, where the kernel evolves during training, capturing the intricate feature learning process of the network \cite{Hanin2020Finite, yang2021tensor}. 
Understanding this kernel evolution is essential for accurately describing the learning dynamics of practical deep learning models.
However, the application of NTK theory has been predominantly confined to FO optimization methods. 
In the realm of ZO optimization, the literature has extensively analyzed perturbation distributions, such as Gaussian or uniform smoothing, strictly through the lens of estimator efficiency. 
Recent works have developed unified frameworks to compare these distributions based on bias, variance, and dimension dependence \cite{liu2020primer, bollapragada2024derivative}, and have derived optimal distributions to minimize the Mean Squared Error (MSE) of gradient estimators within the parameter space \cite{gao2022generalizing}. 
Recent attempts, such as ZOPO \cite{hu2024localized} and KerZOO \cite{mi2025kerzoo}, have begun to incorporate kernel perspectives into derivative-free optimization to improve query efficiency. Crucially, however, the intersection of ZO optimization and NTK dynamics remains unexplored. 

\textbf{Zeroth-Order (ZO) Optimization:} 
ZO optimization has been extensively studied in the literature, primarily from a theoretical optimization perspective. 
Early works focused on convergence guarantees for smooth or stochastic objectives \cite{flaxman2004online, nesterov2017random}, while more recent studies explored the impact of different perturbation distributions on gradient estimation accuracy and optimization efficiency. 
These efforts have provided valuable insights into variance reduction, estimator bias, and function-value convergence \cite{liu2018zeroth, li2018measuring, gao2022generalizing, bollapragada2024derivative}.
Despite these advances, existing studies largely concentrate on optimization performance and provide limited understanding of the underlying training dynamics induced by ZO methods. 
In particular, how random perturbations interact with parameter updates and affect the evolution of model representations during training remains largely unexplored. 
Addressing this gap is crucial for a deeper theoretical and empirical understanding of ZO optimization in modern machine learning and deep learning applications.
To reconcile the discrepancy between the empirical success of ZO fine-tuning and its pessimistic worst-case theoretical analysis, \citet{malladi2023kernel} proposed the low-effective rank assumption. 
This hypothesis builds on prior observations that the Hessian spectrum of a well-trained model is dominated by a few significant eigenvalues, while the remaining eigenvalues cluster near zero~\cite{papyan2020traces, yao2020pyhessian} - a condition distinct from being strictly low-rank. 
Yet, for LLMs, explicitly computing the effective rank is computationally prohibitive. 
Thus, it remains difficult to empirically verify whether this value is truly independent of the model dimension.

\vspace{-2mm}
\section{Proof of Lemma~\ref{lemma:jl}}
\label{sec:proof of jl lemma}

First of all, recall the polarization identity that connects the inner product and distances:
\begin{align}
    \langle u, v\rangle =\frac{1}{4}(\|u+v\|^2 - \|u-v\|^2).
\end{align}
For any linear function $f$, we also have
\begin{align}
    \langle f(u), f(v)\rangle =\frac{1}{4}(\|f(u+v)\|^2 - \|f(u-v)\|^2).
\end{align}
Hence, we just need to show the norms of all vectors in the set $W' =\{u+v | u, v \in W\} \bigcup \{u-v | u, v \in W\}$ are all $\epsilon$-preserved. This set contains at most $2n^2+n$ elements. 
Leveraging the original JL lemma, we know there exists a linear mapping function $f: \real^d \to \real^P$, where $P=\cO(\ln(|W'|/\epsilon^2) = \cO(\ln(n)/\epsilon^2)$, such that for any $w\in W'$ we have
\begin{align}
    (1-\epsilon) \|w\|^2 \leq \|f(w)\|^2 \leq (1+\epsilon)\|w\|^2.
\end{align}
Substituting the above result into the polarization identity
\begin{align} 
\langle f(u), f(v) \rangle & \le \frac{1}{4} \left[ (1+\epsilon)\|u+v\|^2 - (1-\epsilon)\|u-v\|^2 \right] \nn \\ 
& = \frac{1}{4} \left[ (\|u+v\|^2 - \|u-v\|^2) + \frac{\epsilon}{2} (\|u+v\|^2 + \|u-v\|^2) \right]\nn\\
& \leq \langle u, v\rangle + \frac{\epsilon}{2}(\|u\|^2 + \|v\|^2).
\end{align}
Similarly, we can establish the lower bound:
\begin{align}
    \langle f(u), f(v) \rangle & \geq \langle u, v\rangle - \frac{\epsilon}{2}(\|u\|^2 + \|v\|^2).
\end{align}
Combining the above two results, we complete the proof of this Lemma. \qed

\section{Zeroth-Order Optimization}
\label{app.zo.optimization}
For completeness, we provide the convergence of multi-perturbation ZO SGD to compare it with the learning dynamics point of view. To begin with, we need the following key lemma about the fourth-moment of Gaussian vector.
\begin{lemma}[Fourth-Order Moment of Gaussian Vector]\label{lemma.gaussian.moment}
Suppose that the random vector $u \sim \cN(0, I)$, i.e., drawing from a standard Gaussian distribution, for any symmetric matrix $W$,
\begin{align} \label{eq.gaus.4.standard}
    \Ex uu\tran W  uu\tran =\Tr(W)\cdot I + 2 W. 
\end{align}
\end{lemma}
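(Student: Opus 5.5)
We compute the matrix $\Ex uu\tran W uu\tran$ entrywise, using only the independence of the coordinates of $u\sim\cN(0,I)$ and the scalar Gaussian moments $\Ex u_i^2=1$ and $\Ex u_i^4=3$ (odd moments vanish). The $(i,j)$ entry is the scalar
\begin{align*}
\big[uu\tran W uu\tran\big]_{ij} = u_i u_j \sum_{k,l} W_{kl} u_k u_l ,
\end{align*}
so by linearity of expectation
\begin{align*}
\big[\Ex uu\tran W uu\tran\big]_{ij} = \sum_{k,l} W_{kl}\, \Ex[u_i u_j u_k u_l].
\end{align*}
The whole problem therefore reduces to the fourth-moment tensor of a standard Gaussian vector.

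\textbf{Isserlis' identity.} By independence, or equivalently by Wick's theorem,
\begin{align*}
\Ex[u_i u_j u_k u_l] = \delta_{ij}\delta_{kl} + \delta_{ik}\delta_{jl} + \delta_{il}\delta_{jk}.
\end{align*}
We justify this by cases.
\begin{itemize}
\item If some index appears an odd number of times, the expectation is $0$, since odd moments vanish and the coordinates are independent. The right-hand side is also $0$ in this case.
\item If the indices pair up as two distinct values, e.g. $i=j\neq k=l$, the expectation is $\Ex u_i^2\,\Ex u_k^2 = 1$. Exactly one delta product on the right survives, so the right-hand side is also $1$.
\item If all four indices are equal, the expectation is $\Ex u_i^4 = 3$, and all three delta products equal $1$.
\end{itemize}

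\textbf{Contraction with $W$.} Substituting the identity, the three terms give:
\begin{itemize}
\item $\delta_{ij}\sum_k W_{kk} = \delta_{ij}\Tr(W)$;
\item $W_{ij}$;
\item $W_{ji}$, which equals $W_{ij}$ by symmetry of $W$.
\end{itemize}
Reassembling into matrix form yields $\Tr(W)\, I + 2W$, which is the claimed identity.

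The only point needing care is the bookkeeping in the Isserlis step: the diagonal case must produce exactly $3$ to match the three delta terms. Symmetry of $W$ is used only at the very end, to merge $W_{ij}$ and $W_{ji}$. For non-symmetric $W$ the result would instead be $\Tr(W)I + W + W\tran$.

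As a sanity check, take $W=\nabla\ell\nabla\ell\tran$ and take the trace. This gives $\Tr(W)\,d + 2\Tr(W) = (d+2)\|\nabla\ell\|^2$, which matches step (a) used earlier in the paper.
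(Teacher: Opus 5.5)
Your proof is correct and follows essentially the same route as the paper. Both arguments expand the $(i,j)$ entry as $\sum_{k,l} W_{kl}\,\Ex[u_iu_ju_ku_l]$ and evaluate it using coordinate independence and $\Ex u_i^4 = 3$. The paper splits directly into off-diagonal entries, where symmetry of $W$ yields $2W_{ij}$, and diagonal entries, where $\sum_{k\neq i}W_{kk} + 3W_{ii} = \Tr(W) + 2W_{ii}$. You instead record the full Isserlis tensor first and then contract, which is the same bookkeeping organized a bit more cleanly.
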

\textit{Proof.}
Let the matrix $\Psi = uu\tran W  uu\tran$. For each element $i\neq j$, 
\begin{align}
    \Psi[i,j] =\Ex u_i u_j (\sum_{i',j'} u_{i'} u_{j'}W[i', j']) = 2\Ex u_i^2 u_j^2 W[i,j] = 2 W[i,j],
\end{align}
where the second equality holds because the zero-mean property of $z$ and $z_i$ is independent of each other. 
For the diagonal elements, we have
\begin{align}
    \Psi[i,i] =&\Ex u_i^2 (\sum_{i',j;} u_{i'} u_{j'}W[i', j']) = \sum_{i'} \Ex u_i^2 u_{i'}^2W[i',i']\nn\\
    =& \sum_{i'\neq i}\Ex u_i^2\Ex u_{i'}^2 W[i',i'] + \Ex u_i^4 W[i,i]\nn\\
    =& \sum_{i'}W[i',i']+ 2W[i,i]\cdot I,
\end{align}
where we utilize the fact that $\Ex z_i^4 = 3 I^2$. 
Lastly, combining the above two results, we establish
\begin{align}
    \Psi = \Tr(W)\cdot I + 2 W 
\end{align} \qed 

Leveraging this lemma, we can derive the expectation for the multi-perturbation setting. Let the average projection matrix be denoted by $\bar{U} = \frac{1}{P}\sum_{p=1}^P u_{t,p}u_{t,p}^\top$. The expectation of the quadratic form expands as follows:
\begin{align}
    \mathbb{E} \left[ \bar{U} W \bar{U} \right]
    &= \frac{1}{P^2} \sum_{p=1}^P \sum_{q=1}^P \mathbb{E} \left[ u_{t,p}u_{t,p}^\top W u_{t,q}u_{t,q}^\top \right] \nonumber \\
    &\stackrel{(a)}{=} \frac{P^2-P}{P^2} W + \frac{P}{P^2} \mathbb{E} \left[ u_{t,p}u_{t,p}^\top W u_{t,p}u_{t,p}^\top \right] \nonumber \\
    &= \left(1-\frac{1}{P}\right) W + \frac{1}{P} \left( \text{Tr}(W)\cdot I + 2W \right) \nonumber \\
    &= \left(1+\frac{1}{P}\right)W + \frac{1}{P} \text{Tr}(W) \cdot I,
\end{align}
where we split the double summation into the case that $p=q$ and $p\neq q$ in the step (a).

In this paper, $u_{t,p} \in \mathbb{R}^d$ represents the perturbation vector, implying that $W \in \mathbb{R}^{d \times d}$ resides in a high-dimensional feature space. In the special case where $W=I$, the trace of the resulting expectation is $(1+\frac{1+d}{P})d$, which scales quickly with the dimension $d$. As we will demonstrate shortly, this dimensional dependence is the primary source of the high variance observed in zeroth-order gradient estimates.

\vspace{-2mm}
\subsection{The Convergence Rate under Standard Assumptions}
Following standard conventions in the optimization literature, we adopt the following two assumptions regarding the loss landscape and the stochastic oracle.
\begin{assumption}[$L$-Smoothness]\label{ass:smoothness}
The loss function $\ell(\theta)$ is $L$-smooth. That is, for all $\theta, \theta' \in \mathbb{R}^d$:
\begin{align}\ell(\theta') \leq \ell(\theta) + \langle \nabla \ell(\theta), \theta' - \theta\rangle + \frac{L}{2}\|\theta - \theta'\|^2.
\end{align}
$\hfill\qed$
\end{assumption}
\begin{assumption}[Unbiased Gradient with Bounded Variance]\label{ass:variance}
We assume the algorithm accesses a stochastic oracle $\widehat{\ell}(\theta)$ whose gradient is an unbiased estimator of the true gradient with bounded variance. Specifically, for all $\theta$:
\begin{align}
\mathbb{E} [\nabla \widehat{\ell}(\theta)] = \nabla \ell(\theta), \quad \text{and} \quad\mathbb{E} \| \nabla \widehat{\ell}(\theta) - \nabla \ell(\theta)\|^2 \leq \sigma^2.
\end{align}
$\hfill\qed$
\end{assumption}
The multi-perturbation ZO GD algorithm has the following form
\begin{align}
    \theta_{t+1} =& \theta_t - \eta \frac{1}{P}\sum_{p=1}^P \left[\frac{1}{2\mu}\Big(\widehat{\ell}(\theta_t + \mu u_{t,p})  - \widehat{\ell}(\theta_t - \mu u_{t,p})\Big)u_{t, p}\right] \\
    =&\theta_t - \eta \frac{1}{P}\sum_{p=1}^P \langle \nabla \widehat{\ell}(\theta), u_{t,p} \rangle u_{t,p} + \cO(\mu \eta) \\
    =&\theta_t - \eta \frac{1}{P}\sum_{p=1}^P u_{t,p}u_{t,p}\tran \nabla \widehat{\ell}(\theta) + \cO(\mu \eta) \\
    = &\theta_t - \eta \left[\frac{1}{P} U_{t,P} U_{t,P}\tran \right]\nabla \widehat{\ell}(\theta) + \cO(\mu \eta),  \label{eq.fheuifwe}
\end{align}
where $\nabla \widehat{\ell}(\theta)\in \real^{d\times 1}$ is the stochastic gradient, and $U_{t,P} \in \real^{d\times P}$ is the horizontal stack of $P$ perturbation vectors. The second equality applies the following mean-value theorem and assumes the Hessian is bounded
\begin{align}
&\hspace{-3mm}\widehat{\ell}(\theta_t + \mu u_{t,p}) - \widehat{\ell}(\theta_t - \mu u_{t,p})\nonumber\\
    &= \widehat{\ell}(\theta_t) + \mu \langle  {\nabla}\widehat{\ell}(\theta), u_{t,p}\rangle + \frac{1}{2}\mu^2 u_{t,p}\tran \left(\int_0^1 \nabla^2 \widehat{\ell}(\theta+au_{t,p})\textrm{d}a\right) u_{t,p} \nonumber
    \\
    &\;\;\;\;\;- \left(\widehat{\ell}(\theta_t) - \mu \langle \widehat{\nabla} \ell(\theta), u_{t,p}\rangle + \frac{1}{2}\mu^2 u_{t,p}\tran \left(\int_0^1 \nabla^2 \widehat{\ell}(\theta-au_{t,p})\textrm{d}a\right) u_{t,p}\right) \nonumber\\
    & = 2\mu \langle {\nabla} \widehat{\ell}(\theta_t), u_{t,p} \rangle + \cO(\mu^2)
\end{align}
Hence, the ZO gradient is an unbiased estimate of the true gradient (within that $\cO(\mu^2)$ smoothness meaning). 
Without loss of generality, we assume that the entries of the perturbation vector $u_{t,p}$ are independent and identically distributed (i.i.d.) drawn from a standard Gaussian distribution. Under this assumption, it is straightforward to establish that: 
\begin{align}
    \Ex \frac{1}{P}\sum_{p=1}^P u_{t,p}u_{t,p}\tran  = \frac{1}{P}\sum_{p=1}^P \Ex u_{t,p}u_{t,p}\tran = \frac{1}{P}\sum_{p=1}^P  I = I
\end{align}
Then, taking the conditional expectation on \eqref{eq.fheuifwe}, we immediately obtain
\begin{align}
    \Ex_t \theta_{t+1} = \theta_t - \eta \nabla \ell(\eta_t) + \cO(\mu\eta)
\end{align}
For the variance $\Ex_t\|\theta_{t+1} - \theta_t\|^2$, we will apply the lemma listed previously. Utilizing the Lipschitz condition, we have
\begin{align}
    \Ex_{t} \ell(\theta_{t+1}) \leq&  \ell(\theta_t) + \langle\nabla \ell(\theta_t), \Ex_t(\theta_{t+1} - \theta_t) \rangle + \frac{L}{2}\Ex_t\|\theta_{t+1}-\theta_t\|^2 \nn\\
    =&\ell(\theta_t) - \eta \|\nabla \ell(\theta_t)\|^2 + O(\eta^2\mu)+ \frac{\eta^2 L}{2}\Tr\left\{\Ex_t\left[\left(\frac{1}{P}\sum_{p=1}^P u_{t,p}u_{t,p}\tran\right) {\nabla} \widehat{\ell}(\theta) {\nabla} \widehat{\ell}(\theta)\tran \left(\frac{1}{P}\sum_{p'=1}^P u_{t,p'}u_{t,p'}\right) \right] \right\}\nn\\
    \leq& \ell(\theta_t) - \eta \|\nabla \ell(\theta_t)\|^2 +\underbrace{\frac{\eta^2 L}{2}(1+\frac{d+1}{P}) \|{\nabla} \widehat{\ell}(\theta_t)\|^2}_{\mbox{ZO Grad. Var. (Worst Case)}} + O(\eta^2\mu) 
\end{align}
Assuming $d \gg P$, we have $1+\frac{d+1}{P} \approx \frac{d}{P}$.  Re-arranging terms and taking the expectation over stochastic noise, we obtain
\begin{align}
    \eta\left(1 - \frac{\eta Ld}{2P}\right)\|\nabla \ell(\theta_t)\|^2 \leq \ell(\theta_{t}) - \Ex_{t} \ell(\theta_{t+1}) + \frac{\eta^2 d L}{2 P }\sigma^2 +  O(\eta^2\mu) \label{eq.hifew}
\end{align}
If $\eta \leq \frac{P}{Ld}$, we establish the following sub-linear convergence rate
\begin{align}
    \frac{1}{T}\sum_{t=0}^{T-1} \Ex\|\nabla \ell(\theta_t)\|^2 \leq \frac{2}{\eta T} \left(\ell(\theta_{0}) - \ell^\star\right) + \frac{\eta d L}{P }\sigma^2 + O(\eta\mu)
\end{align}

For sufficiently large $T$ and let $\eta=\cO(\sqrt{\frac{P}{dLT}})$, we have the convergence rate
\begin{align}
    \frac{1}{T}\sum_{t=0}^{T-1} \Ex\|\nabla \ell(\theta_t)\|^2 \leq \cO\left(\sqrt{\frac{dL}{PT}}\right) \label{eq.zo.standard.bound}
\end{align}
From this perspective, it is clear that under the fixed iterations, the larger perturbation $P$ leads to better performance. This is not a surprising conclusion. However, if we calculate the total query complexity, i.e., the total iterations multiplied by the perturbations per iteration, we have
\begin{align}
    \mbox{Query Complexity} = \cO \left(\sqrt{\frac{dPL}{T}}\right)
\end{align}

\vspace{-2mm}
\subsection{Low Effective Rank Assumption} \label{app.zo.low_rank}
Improvements to this bound are possible under the low-effective rank assumption, which can enhance the worst-case convergence rate. 
The primary challenge, however, is that the effective rank is computationally expensive to obtain. Consequently, this assumption offers little practical guidance for selecting an appropriate ZO method.

The effective rank of Hessian is defined as $\Tr(H) / \|H\|_2$, that is the sum of the eigenvalues divided by the maximum of the eigenvalues of $H$. The low-effective rank assumption of Hessian is 
\begin{assumption}[Low Effective Rank]
There exists a Hessian matrix $H(\theta) \preceq L\cdot I_d$, such that $\nabla^2 \ell(\theta') \preceq H(\theta)$ for any $\theta'$ in a small neighborhood of $\theta$, and the effective rank of $H$ satisfies $\Tr(H) / \|H\|_2 \leq \kappa$. $\hfill \qed$
\end{assumption}

Several indirect evidences indicate that $\kappa\ll d$ \cite{papyan2020traces, yao2020pyhessian, malladi2023finetuning}.
The looseness of the bound comes from the $L$-smoothness usage. Instead of using the bounds of Hessian, we can expand the iteration while keeping the Hessian. In this way, we can leverage the low effective rank assumption.
\begin{align}
    \Ex_{t} \ell(\theta_{t+1}) \leq&  \ell(\theta_t) + \langle\nabla \ell(\theta_t), \Ex_t(\theta_{t+1} - \theta_t) \rangle + \frac{1}{2}\Ex_t\left((\theta_{t+1}-\theta_t)\tran H_t  (\theta_{t+1}-\theta_t)\right)\nonumber\\
    \leq& \ell(\theta_t) - \eta \|\nabla \ell(\theta_t)\|^2 +\frac{\eta^2}{2}\Big((1+\frac{1}{P}) \|{\nabla} \widehat{\ell}(\theta_t)\|^2_{H_t} + \frac{1}{P}\Tr(H_t)\|\widehat{\ell}(\theta_t)\|^2\Big)+ O(\eta^2\mu), 
\end{align}
where $\|{\nabla} \widehat{\ell}(\theta_t)\|^2_{H_t} = {\nabla} \widehat{\ell}(\theta_t)\tran H_t {\nabla} \widehat{\ell}(\theta_t)$. To leverage the low-effective rank assumption, we can treat this term through
$$
\|{\nabla} \widehat{\ell}(\theta_t)\|^2_{H_t} \leq L \Tr(H/\|H\|_2 {\nabla} \widehat{\ell}(\theta_t) {\nabla} \widehat{\ell}(\theta_t)\tran) \leq  L\kappa \|{\nabla} \widehat{\ell}(\theta_t)\|^2.
$$
Substituting back and similarly assuming the $\kappa \gg P$, we arrive
\begin{align}
    \eta\left(1 - \frac{\eta L\kappa}{2P}\right)\|\nabla \ell(\theta_t)\|^2 \leq \ell(\theta_{t}) - \Ex_{t} \ell(\theta_{t+1}) + \frac{\eta^2 \kappa L}{2 P }\sigma^2 +  O(\eta^2\mu)
\end{align}
Now we have a bigger stability range for the learning rate $\eta\leq \frac{P}{L\kappa}$ to establish the following sub-linear convergence rate
\begin{align}
    \frac{1}{T}\sum_{t=0}^{T-1} \Ex\|\nabla \ell(\theta_t)\|^2 \leq \frac{2}{\eta T} \left(\ell(\theta_{0}) - \ell^\star\right) + \frac{\eta \kappa L}{P }\sigma^2 + O(\eta\mu)
\end{align}
For sufficiently large $T$ and let $\eta=\cO(\sqrt{\frac{P}{dLT}})$, we finally get the convergence rate under the low-effective rank assumption:
\begin{align}
    \frac{1}{T}\sum_{t=0}^{T-1} \Ex\|\nabla \ell(\theta_t)\|^2 \leq \cO\left(\sqrt{\frac{\kappa L}{PT}}\right)
\end{align}
Because $\kappa \ll d$, this bound typically is much better than the one in \eqref{eq.zo.standard.bound}. If $\kappa$ is independent of $d$, above bound is also referred as dimension-free one.

\vspace{-2mm}
\section{Experiment Setup and Extra Empirical Results}
\label{appendix:exp}
\vspace{-1mm}

In this section, we provide more experiments setup details and empirical results to support our findings and conclusions in our main paper.  
To empirically validate our theoretical findings regarding the learning dynamics of ZO optimization, we conduct a series of experiments analyzing the approximation fidelity of the ZO eNTK. 
If the ZO eNTK can approch FO eNTK (i.g., low error, or high visual similarity), we will regard it as a good ZO approximation. 
We compute the exact FO eNTK as the ground truth baseline. 
We then compare this against the ZO eNTK estimated using stochastic perturbations. 
Our experiment setup are provided as follows: 

\vspace{-2mm}
\subsection{Group 1: LeNet Model, MNIST Dataset.}

Our experiments start with some examples on the relatively low-dimensional LeNet ($d=29,624$) model trained on the MNIST dataset. 
Unless otherwise stated, we employ standard Gaussian perturbations $u \sim \mathcal{N}(0, I)$. 
We systematically vary the number of perturbations $P$ (e.g., $P \in \{1, 5, 25, 125\}$) to observe the convergence trajectory of the ZO kernel towards the FO limit. 
To intuitively demonstrate the geometric alignment, we visualize the $V \times V$ kernel matrix (where $V=10$ is the number of classes) as heatmaps, comparing the structural similarity between FO and ZO updates across different input pairs.

To rigorously quantify the approximation fidelity observed in the heatmaps, we track the convergence trajectory of the ZO eNTK as a function of the perturbation budget $P$. 
We utilize the relative Frobenius error to measure the distance between the ZO eNTK approximation and the ground-truth FO eNTK. 
Formally, for a given pair $(\xu, \xo)$, this is defined as:
$$
\text{Relative Frobenius Error} = \frac{\| \mathcal{K}_{ZO}(\xu, \xo; U_{t,P}) - \mathcal{K}_{FO}(\xu, \xo) \|_F}{\| \mathcal{K}_{FO}(\xu, \xo) \|_F}.
$$
Moreover, in Figure~\ref{fig:eNTK_dist}, we utilize two extra metrics to measure the distance between ZO and FO eNTKs, including Centered Kernel Alignment (CKA) error and Spectral Distance. 
Specifically, 
\begin{align*}
    \textup{CKA Error} = & 1- \textup{CKA} \big(\mathcal{K}_{FO} (\xu, \xo), \mathcal{K}_{ZO} (\xu, \xo; U_{t,P})\big) \\
    = & 1- \frac{\textup{HSIC} \big( \mathcal{K}_{FO} (\xu, \xo), \mathcal{K}_{ZO} (\xu, \xo; U_{t,P}) \big)}{\sqrt{\textup{HSIC} \big( \mathcal{K}_{FO} (\xu, \xo), \mathcal{K}_{FO} (\xu, \xo) \big) \cdot \textup{HSIC} \big( \mathcal{K}_{ZO} (\xu, \xo; U_{t,P}), \mathcal{K}_{ZO} (\xu, \xo; U_{t,P}) \big)}},
\end{align*}
where HSIC is Hilbert-Schmidt Independence Criterion. 
We use this to measure the similarity between ZO-eNTK and FO-eNTK in terms of their ability to represent geometric structure.

To quantify the discrepancy between the eigenvalue distributions of the ZO and FO kernels, we employ the Spectral Distance, calculated as the 1-Wasserstein distance. 
Let $\boldsymbol{\lambda}^{(ZO)}$ and $\boldsymbol{\lambda}^{(FO)}$ denote the sets of eigenvalues for the kernel matrices $\mathcal{K}_{ZO}(\xu,\xo;U_{t,P})$ and $\mathcal{K}_{FO}(\xu,\xo)$, respectively, sorted in non-decreasing order (i.e., $\lambda_1 \le \lambda_2 \le \dots \le \lambda_V$). 
The metric is defined as the mean absolute difference between the sorted spectra:
\begin{align*}
    \textup{Spectral Distance} \big(\mathcal{K}_{ZO}(\xu,\xo;U_{t,P}), \mathcal{K}_{FO}(\xu,\xo) \big) = \frac{1}{V} \sum_{i=1}^V \left| \lambda_i^{(ZO)} - \lambda_i^{(FO)} \right|
\end{align*}
This metric assesses whether the ZO approximation preserves the global scaling and conditioning of the learning dynamics, which are inherently governed by the kernel's spectrum.

Unlike the discrete snapshots used for visualization, we evaluate the error across a continuous range of perturbation counts $P$, extending from $1$ to $1,000$. 
This allows us to verify the asymptotic behavior of the error decay. 
We plot individual convergence curves for each observing sample $\xo$ (classes $0-9$) against the fixed update sample $\xu$. 
This stratification allows us to decouple the impact of sample similarity on convergence speed, distinguishing between "easy" (high similarity) and "hard" (low similarity) alignment tasks.

\vspace{-2mm}
\subsection{Group 2: OPT Models, SST2 Dataset}

To demonstrate the generality of our theoretical framework beyond low-dimensional networks, we extend our analysis to high-dimensional LLMs with varying parameter scales. 
We employ the OPT \cite{zhang2022opt} model family, specifically evaluating OPT-125M, 350M, and 1.3B. 
This progression allows us to empirically verify the dimension-free property of ZO learning dynamics, as the parameter count $d$ increases by an order of magnitude while the output vocabulary dimension $V$ remains constant. 
We conduct experiments on the SST-2 sentiment classification task \cite{socher2013recursive, wang2018glue}. 
Critically, instead of replacing the final layer with a task-specific classification head, we formulate the task using a prompt-based approach (i.e., next-token prediction). 
This design choice ensures that the model's output dimension remains the full vocabulary size $V$ (e.g., $V \approx 50,272$ for OPT), rather than being reduced to the number of classes. 
This setup strictly aligns with our theoretical assumption where the output dimension $V$ is invariant to the model scaling.
We compare the training trajectory of FO SGD against ZO SGD. 
For the ZO optimizer, we evaluate two perturbation distributions: Gaussian and Rademacher, to validate our theoretical claims regarding distribution efficiency.
We vary the number of perturbations $P \in \{1, 5, 10, 20, 50, 100\}$ to observe how the ZO trajectory asymptotically approaches the FO baseline. 

\vspace{-2mm}
\subsection{Extra LLM Experiment Results.}
We further validate our findings through additional LLM experiments on five distinct samples (shown in the yellow text box below). 
The corresponding model logit trajectories and output belief distributions are visualized in Figure~\ref{fig:llm_trajectories} and Figure~\ref{fig:llm_model_belief}, respectively.

As illustrated in Figure~\ref{fig:llm_trajectories}, where the solid line is the ZO trajectory and the transparent line is the FO trajectory, the discrepancy between the ZO and FO probability distributions consistently decreases as the number of perturbations $P$ increases. This trend confirms that, given a sufficient perturbation budget, the ZO training trajectory effectively converges to that of standard FO methods.
Moreover, Figure~\ref{fig:llm_model_belief} also shows that the choice of perturbation distribution (Rademacher v.s. Gaussian) has a negligible impact on long-term convergence. While there may be slight variances at lower perturbation counts, both methods converge to nearly identical $\ell_2$ as $P$ goes beyond $10$, suggesting that ZO optimization is robust to any type of random noise used for gradient estimation.
Furthermore, these two trends discussed above are remarkably consistent across varying model sizes. 
Although the likelihood region of OPT-1.3B model has a wider spread at lower perturbation numbers, the difference and spread decrease and become the same as smaller models as the number of perturbations increases to $100$.


\newtcolorbox{example}{
    colback=orange!5, 
}

\begin{example}
    \textbf{Sample 1:} well-nigh unendurable ... though the picture strains to become cinematic poetry , it remains depressingly prosaic and dull . \\
    \textbf{Sentiment:} \textcolor{red}{\textbf{Negative}}
    \vspace{-1mm}
    
    \rule{\linewidth}{1pt}

    \textbf{Sample 2:} 
    a big , gorgeous , sprawling swashbuckler that delivers its diversions in grand , uncomplicated fashion . \\
    \textbf{Sentiment:} \textcolor{Green}{\textbf{Positive}}
    \vspace{-1mm}
    
    \rule{\linewidth}{1pt}

    \textbf{Sample 3:} 
    it ’s not that kung pow is n’t funny some of the time – it just is n’t any funnier than bad martial arts movies are all by themselves , without all oedekerk ’s impish augmentation.  \\
    \textbf{Sentiment:} \textcolor{red}{\textbf{Negative}}
    \vspace{-1mm}
    
    \rule{\linewidth}{1pt}

    \textbf{Sample 4:} 
    the jabs it employs are short , carefully placed and dead-center . \\
    \textbf{Sentiment:} \textcolor{Green}{\textbf{Positive}}
    \vspace{-1mm}

    \rule{\linewidth}{1pt}

    \textbf{Sample 5:} 
    so unremittingly awful that labeling it a dog probably constitutes cruelty to canines . \\
    \textbf{Sentiment:} \textcolor{red}{\textbf{Negative}}
\end{example}

\begin{figure}[th]
    \centering
    \includegraphics[width=0.95\linewidth]{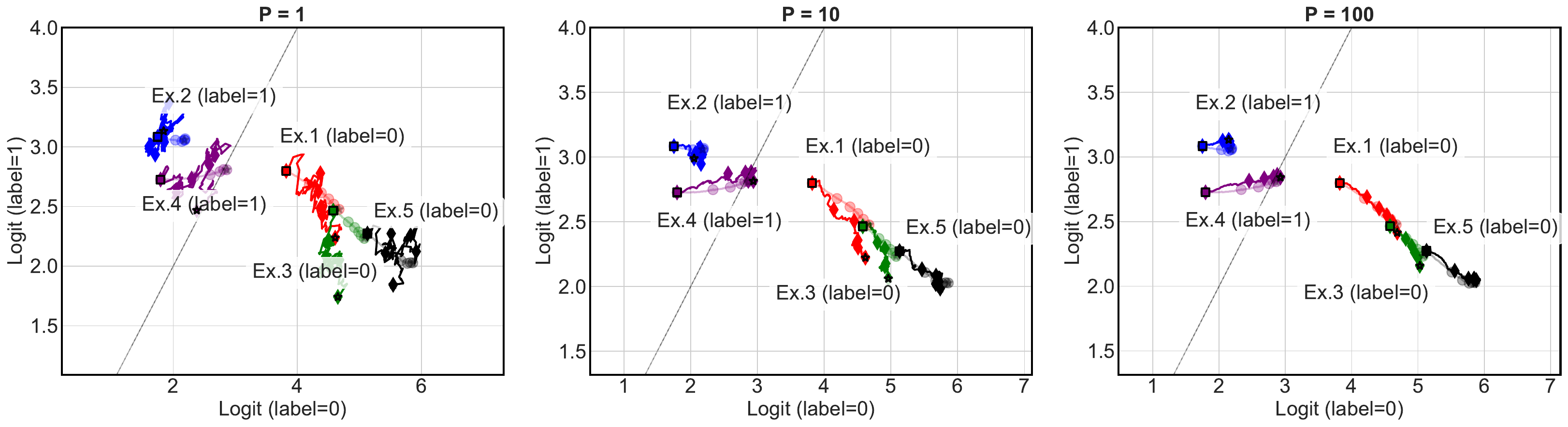}
    \vspace{-1mm}
    \caption{Logit Trajectories on Different $\xo$ over 200 iterations.}
    \label{fig:llm_trajectories}\vspace{-1mm}
\end{figure}

\begin{figure}[th]
    \centering
    \includegraphics[width=0.92\linewidth]{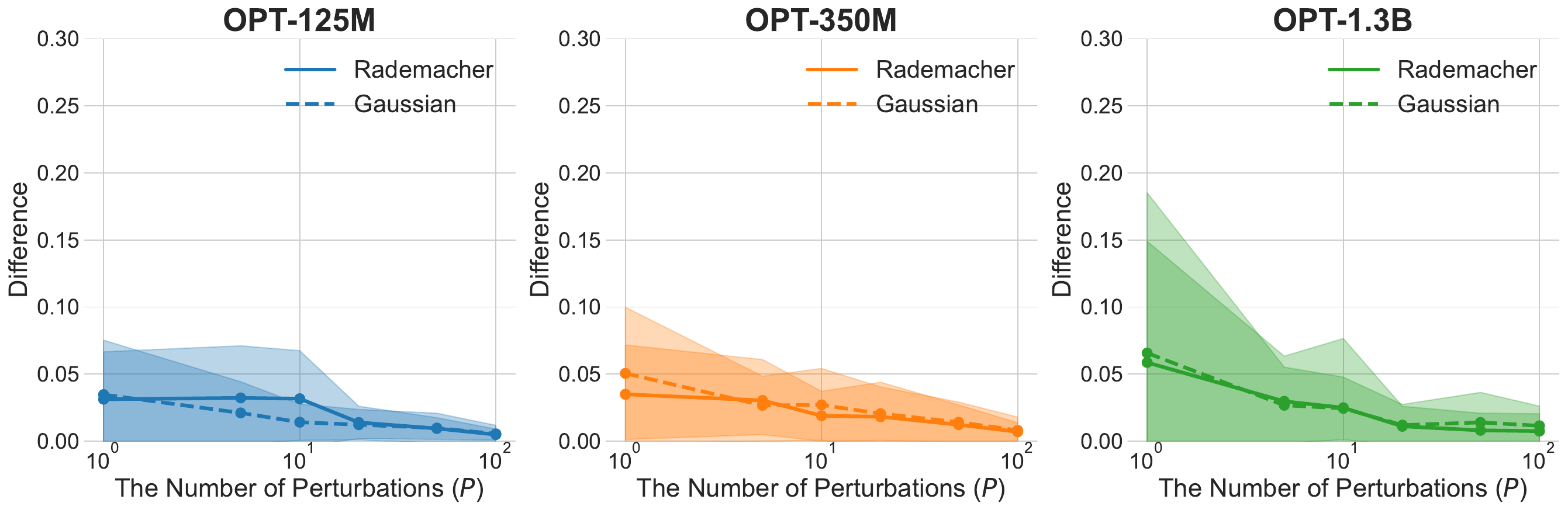}
    \vspace{-1mm}
    \caption{$\ell_2$ Norm Difference between ZO and FO Model Belief}
    \label{fig:llm_model_belief}\vspace{-1mm}
\end{figure}

\begin{figure}[!]
    \vspace{-0.7mm}
    \centering
    \includegraphics[width=1\linewidth]{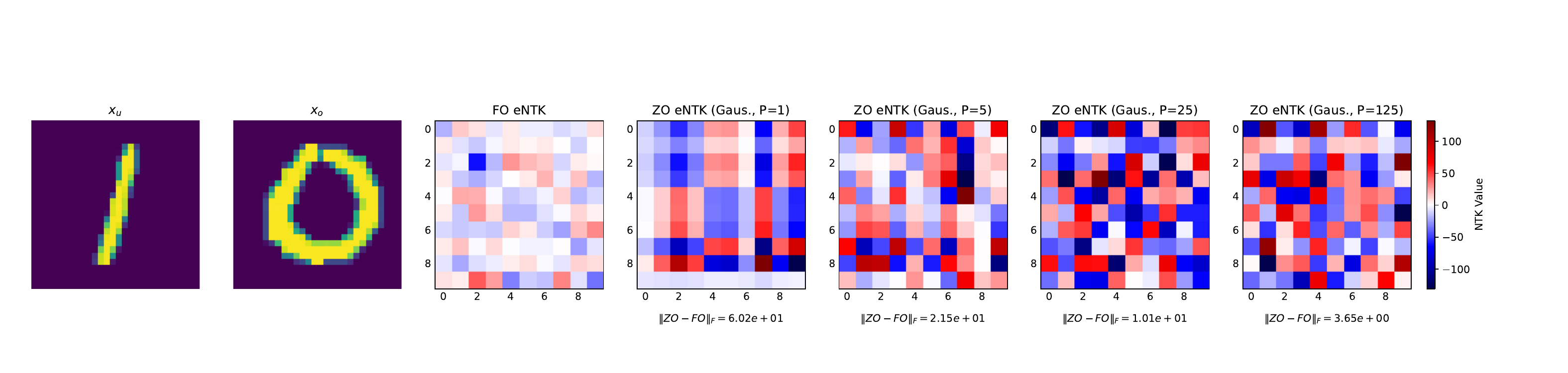}\\[-0.1em]
    \includegraphics[width=1\linewidth]{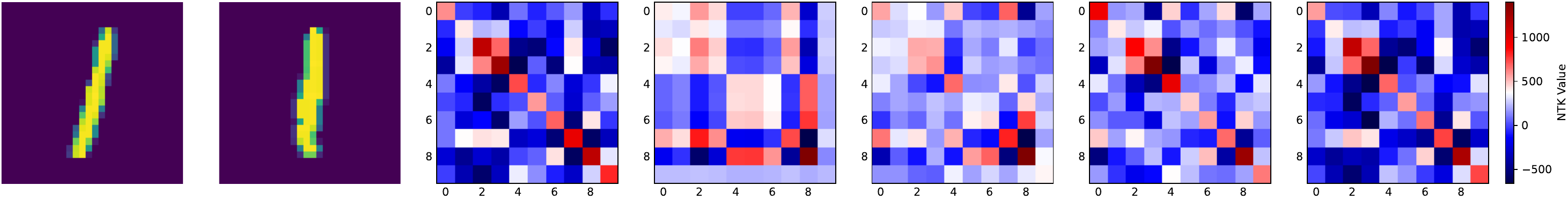}\\[-0.1em]
    \includegraphics[width=1\linewidth]{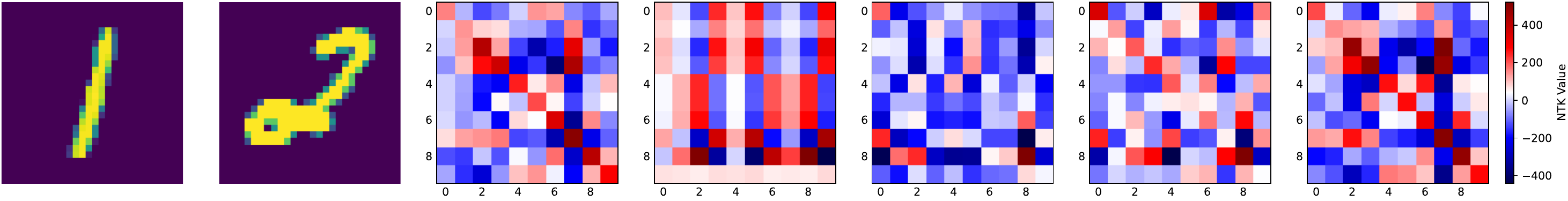}\\[-0.1em]
    \includegraphics[width=1\linewidth]{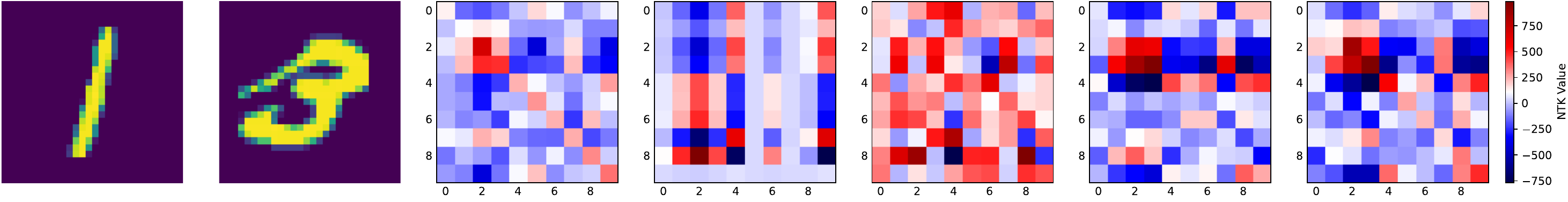}\\[-0.1em]
    \includegraphics[width=1\linewidth]{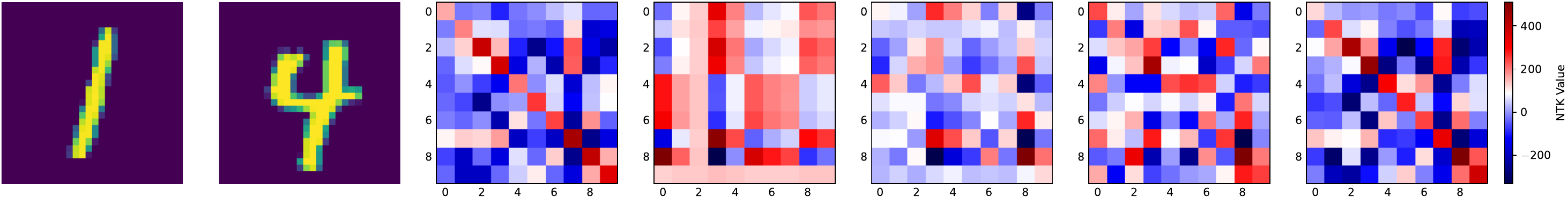}\\[-0.1em]
    \includegraphics[width=1\linewidth]{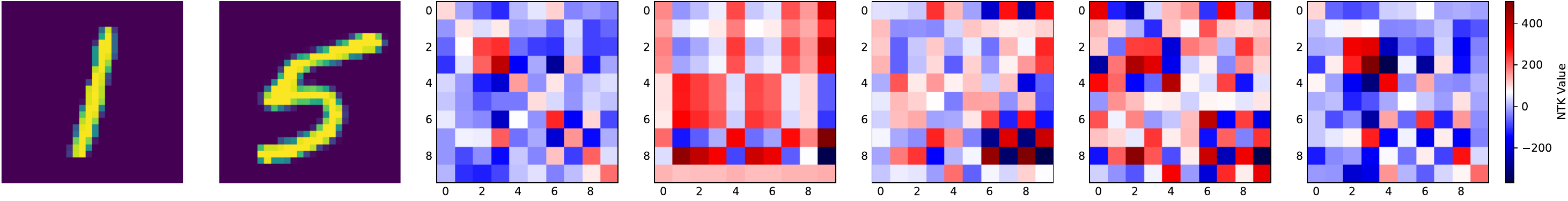}\\[-0.1em]
    \includegraphics[width=1\linewidth]{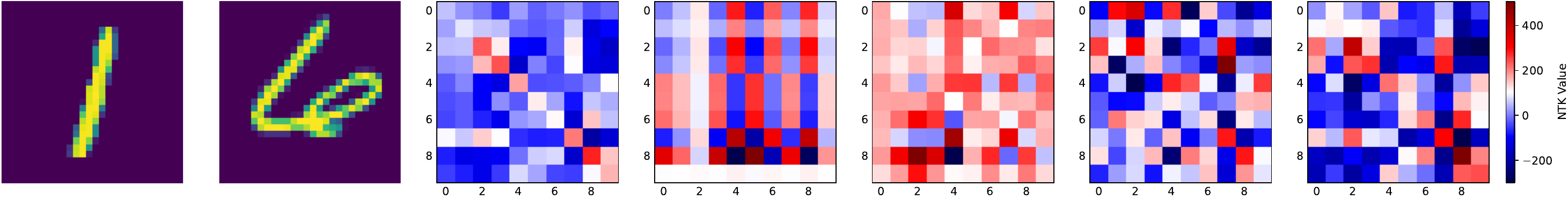}\\[-0.1em]
    \includegraphics[width=1\linewidth]{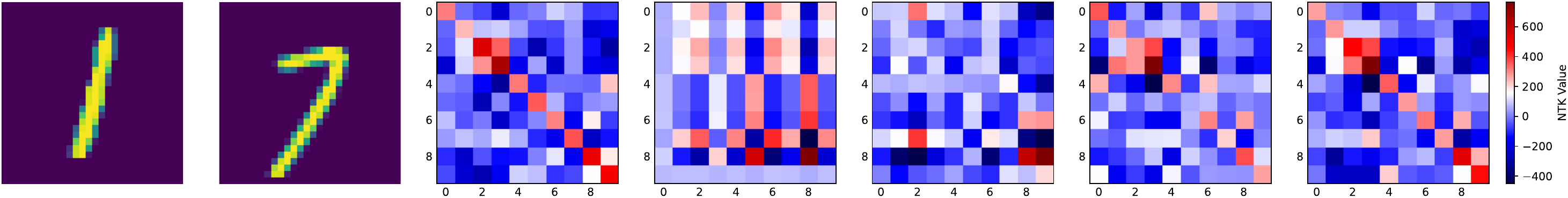}\\[-0.1em]
    \includegraphics[width=1\linewidth]{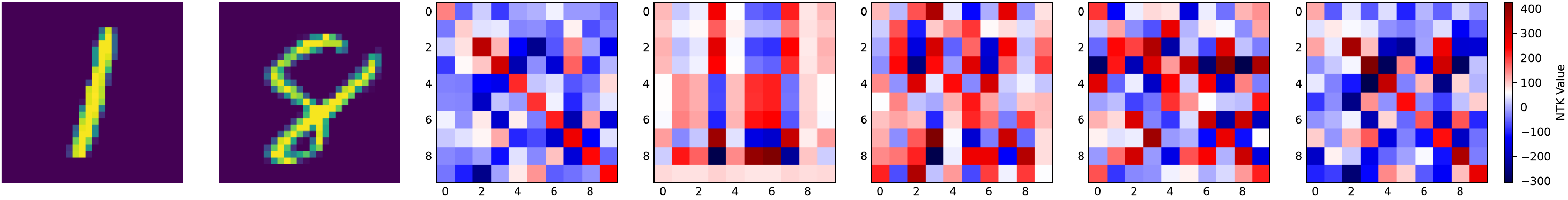}\\[-0.1em]
    \includegraphics[width=1\linewidth]{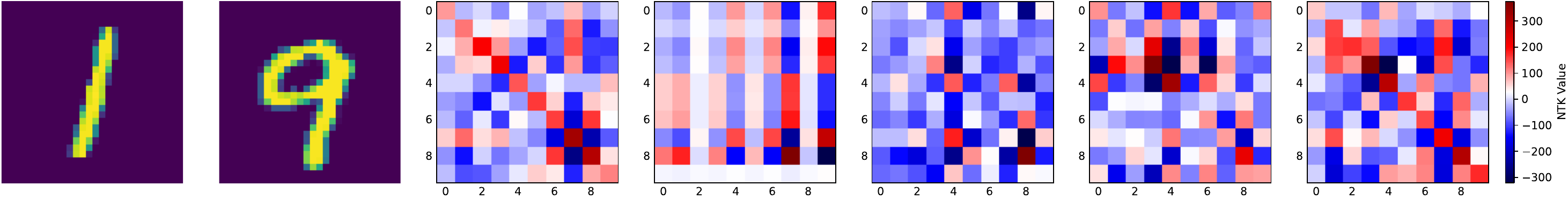}
    \vspace{-5mm}
    \caption{ZO eNTK v.s. FO eNTK under different test samples $\xo$ and a fixed $\xu=1$. (LeNet, MNIST)}
    \label{fig:1+all}
\end{figure}

\begin{figure}[!]
    \centering
    \includegraphics[width=0.8\linewidth]{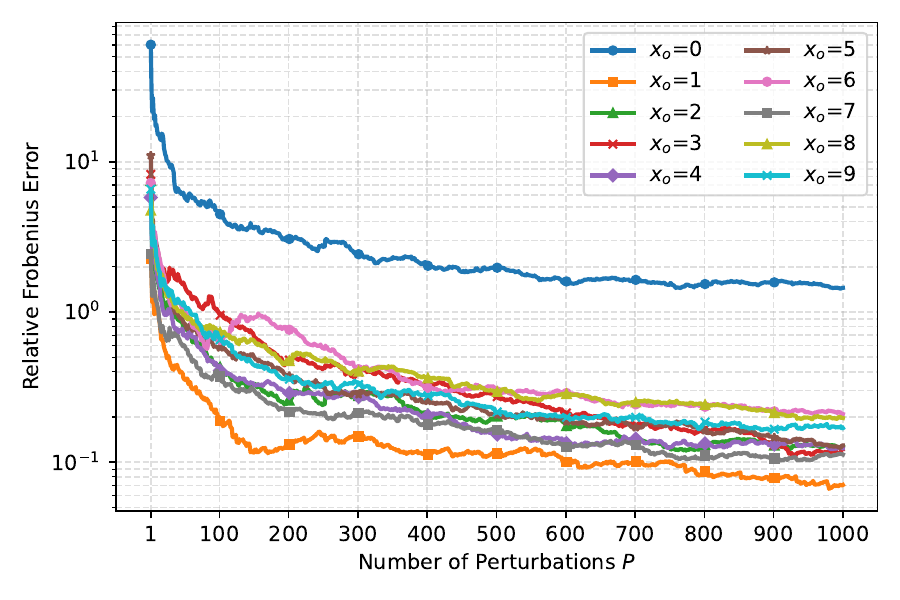}
    \caption{$\xu=1$ (LeNet, MNIST)}
    \label{fig:ntk_converg_1}
\end{figure}

\begin{figure}[!]
    \vspace{-0.7mm}
    \centering
    \includegraphics[width=1\linewidth]{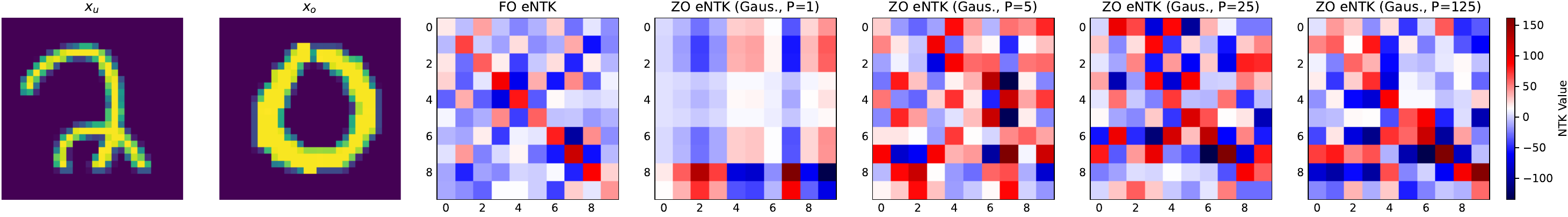}\\[-0.1em]
    \includegraphics[width=1\linewidth]{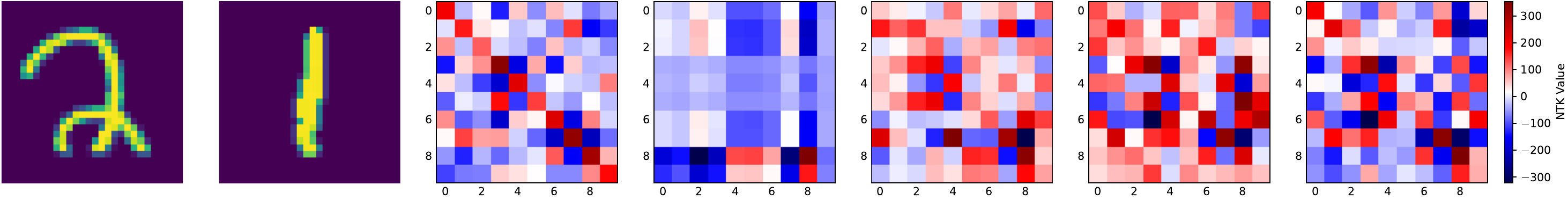}\\[-0.1em]
    \includegraphics[width=1\linewidth]{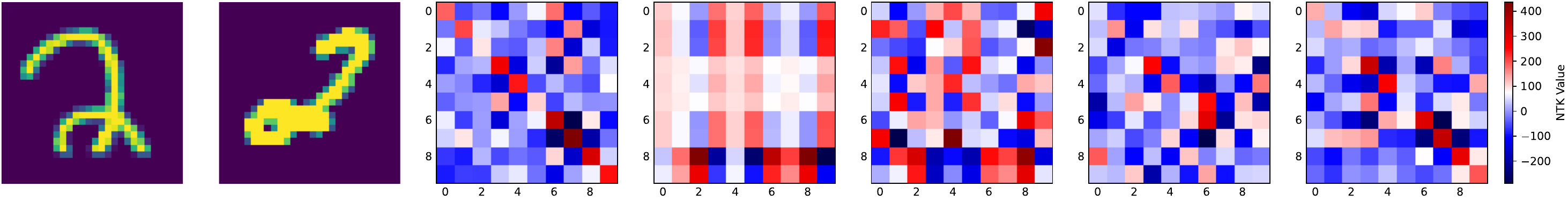}\\[-0.1em]
    \includegraphics[width=1\linewidth]{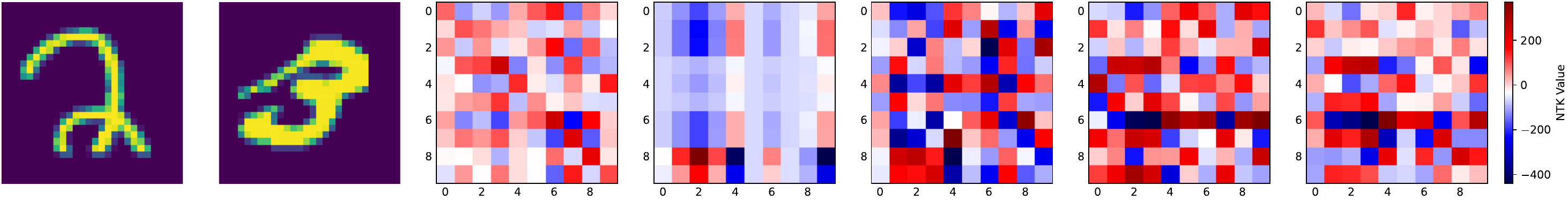}\\[-0.1em]
    \includegraphics[width=1\linewidth]{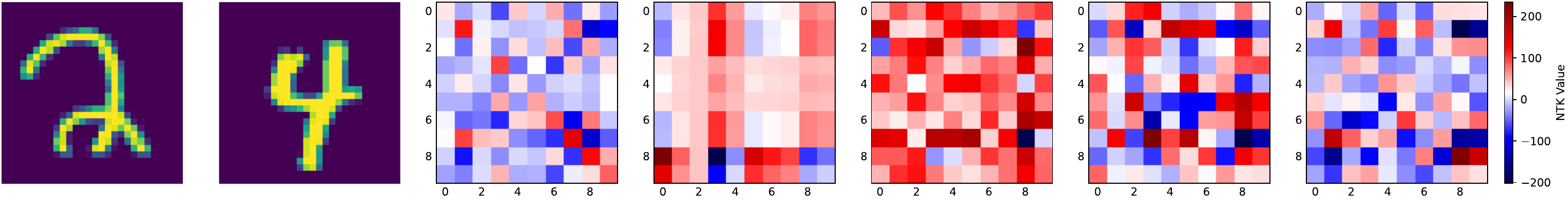}\\[-0.1em]
    \includegraphics[width=1\linewidth]{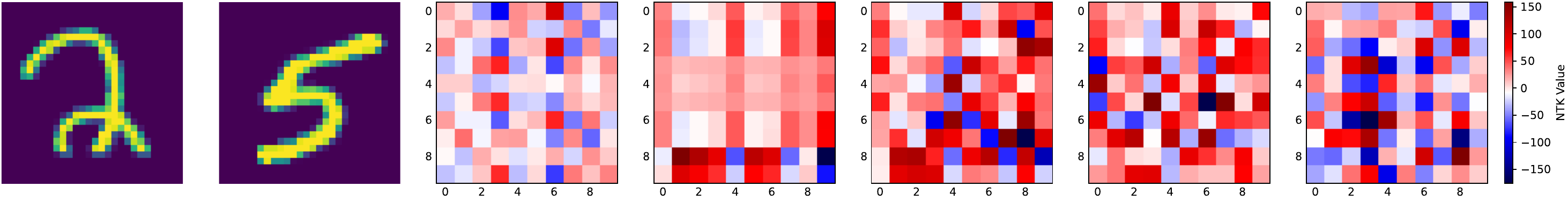}\\[-0.1em]
    \includegraphics[width=1\linewidth]{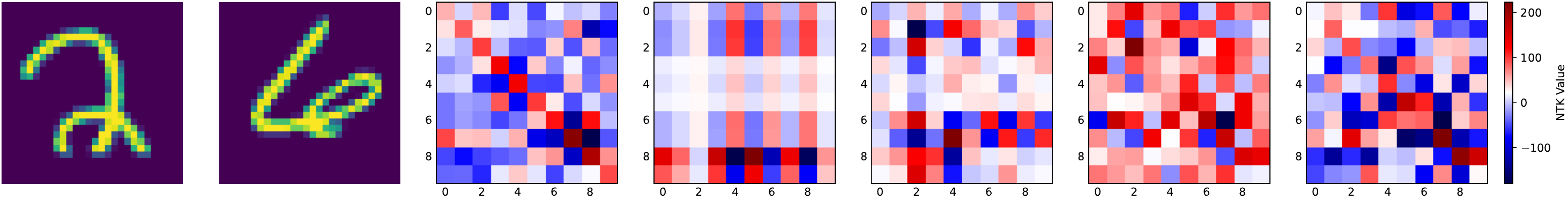}\\[-0.1em]
    \includegraphics[width=1\linewidth]{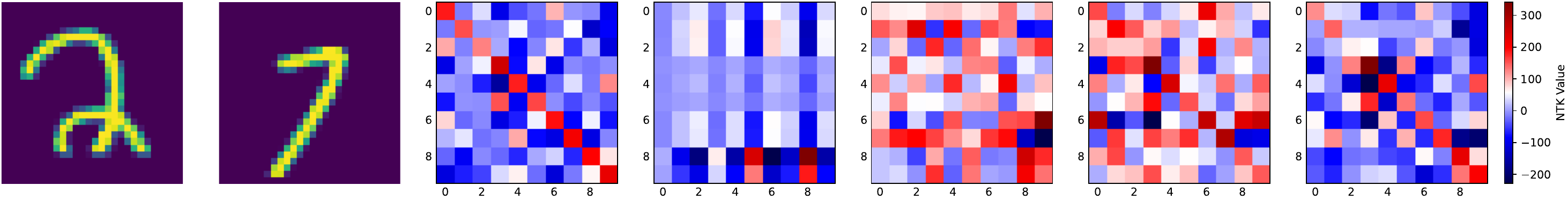}\\[-0.1em]
    \includegraphics[width=1\linewidth]{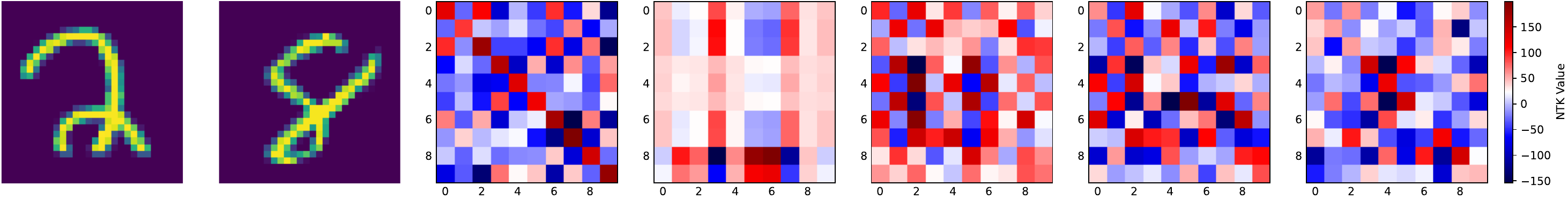}\\[-0.1em]
    \includegraphics[width=1\linewidth]{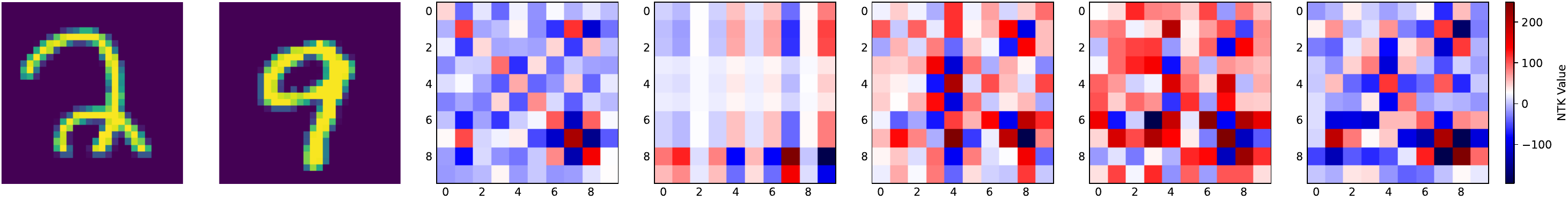}
    \vspace{-5mm}
    \caption{ZO eNTK v.s. FO eNTK under different test samples $\xo$ and a fixed $\xu=2$. (LeNet, MNIST)}
    \label{fig:2+all}
\end{figure}

\begin{figure}[!]
    \centering
    \includegraphics[width=0.8\linewidth]{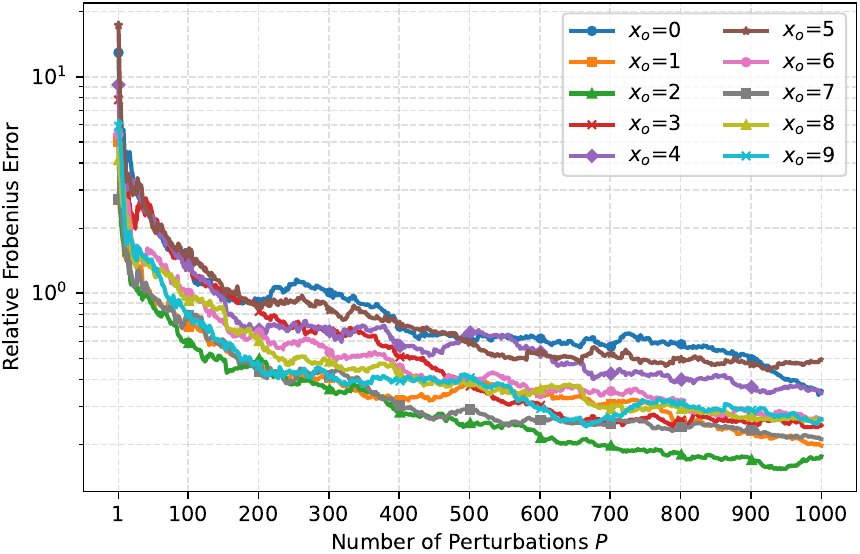}
    \caption{$\xu=2$ (LeNet, MNIST)}
    \label{fig:ntk_converg_2}
\end{figure}

\begin{figure}[!]
    \vspace{-0.7mm}
    \centering
    \includegraphics[width=1\linewidth]{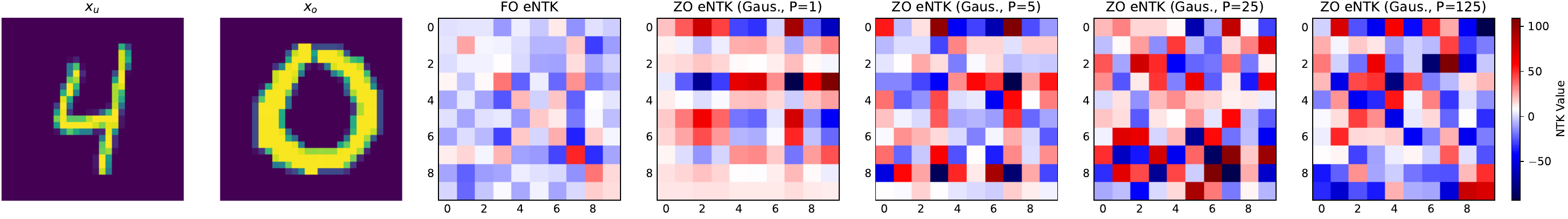}\\[-0.1em]
    \includegraphics[width=1\linewidth]{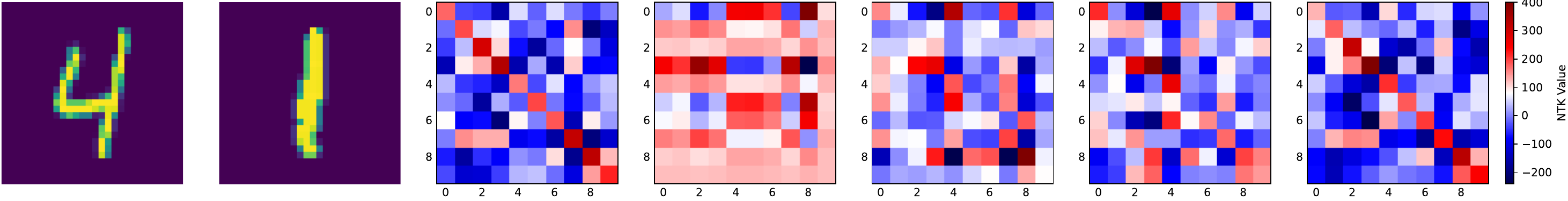}\\[-0.1em]
    \includegraphics[width=1\linewidth]{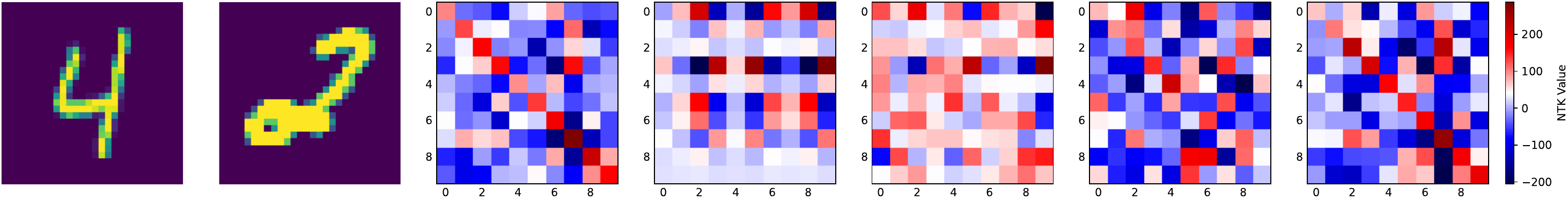}\\[-0.1em]
    \includegraphics[width=1\linewidth]{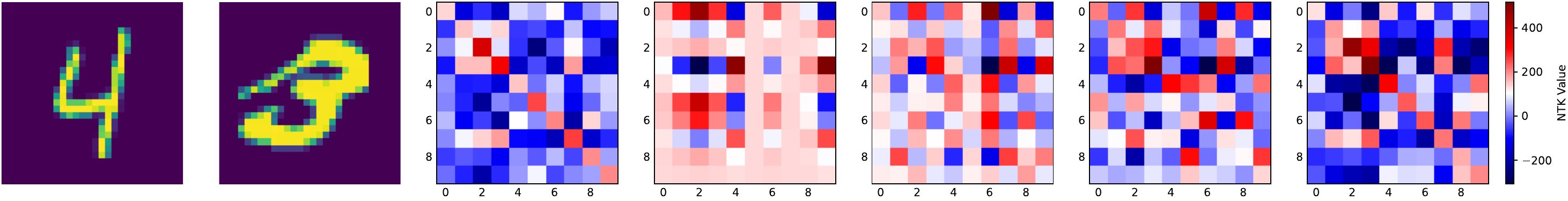}\\[-0.1em]
    \includegraphics[width=1\linewidth]{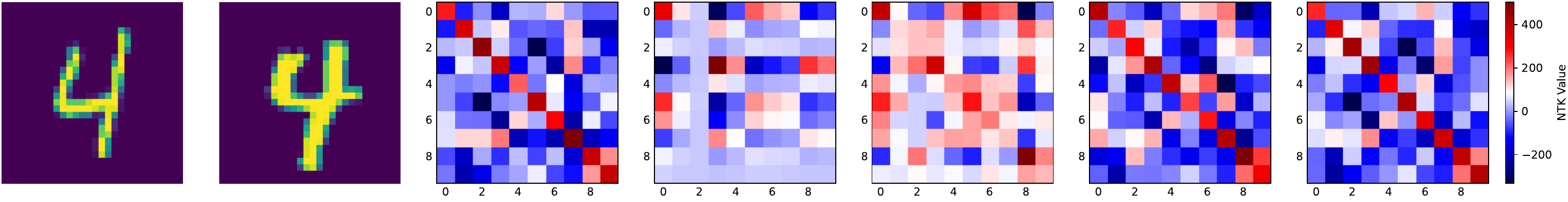}\\[-0.1em]
    \includegraphics[width=1\linewidth]{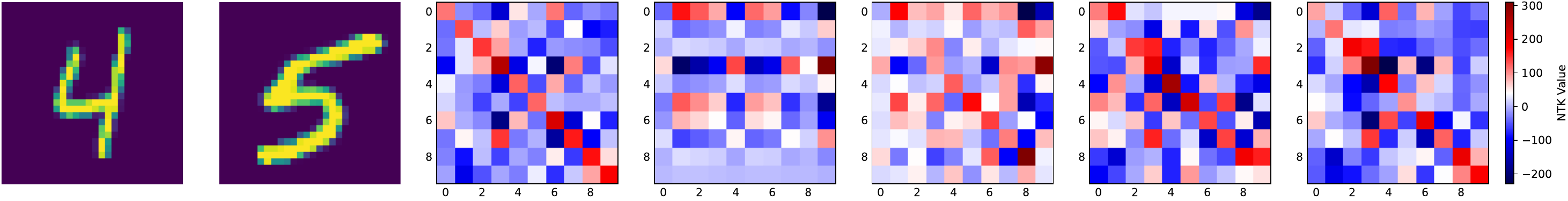}\\[-0.1em]
    \includegraphics[width=1\linewidth]{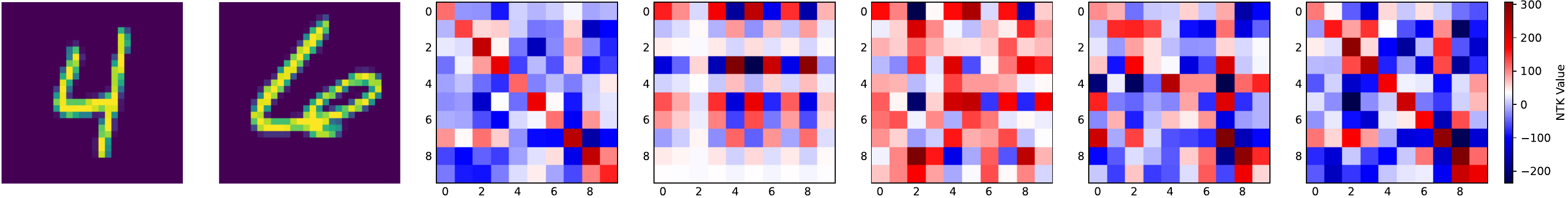}\\[-0.1em]
    \includegraphics[width=1\linewidth]{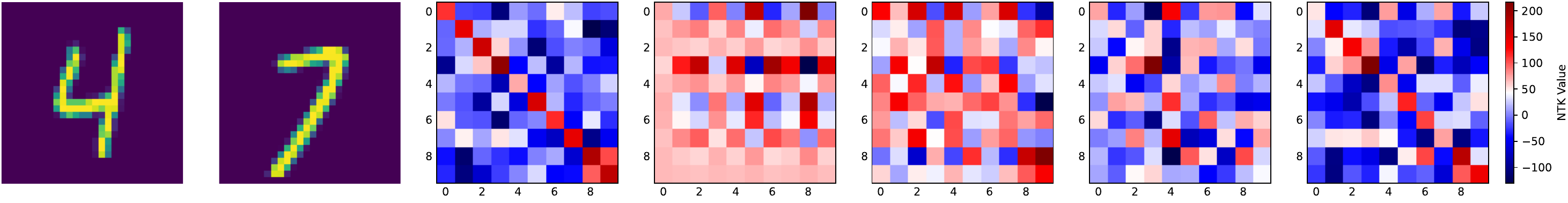}\\[-0.1em]
    \includegraphics[width=1\linewidth]{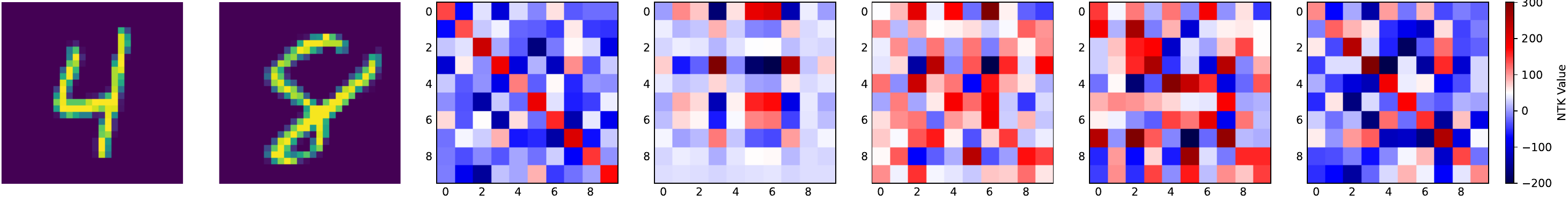}\\[-0.1em]
    \includegraphics[width=1\linewidth]{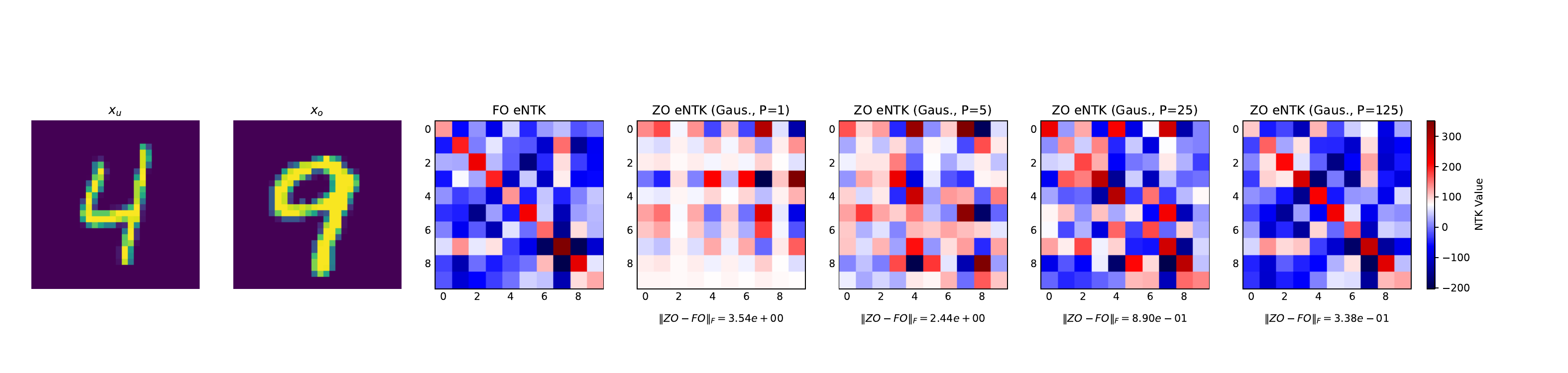}
    \vspace{-5mm}
    \caption{ZO eNTK v.s. FO eNTK under different test samples $\xo$ and a fixed $\xu=4$. (LeNet, MNIST)}
    \label{fig:4+all}
\end{figure}

\begin{figure}[!]
    \centering
    \includegraphics[width=0.8\linewidth]{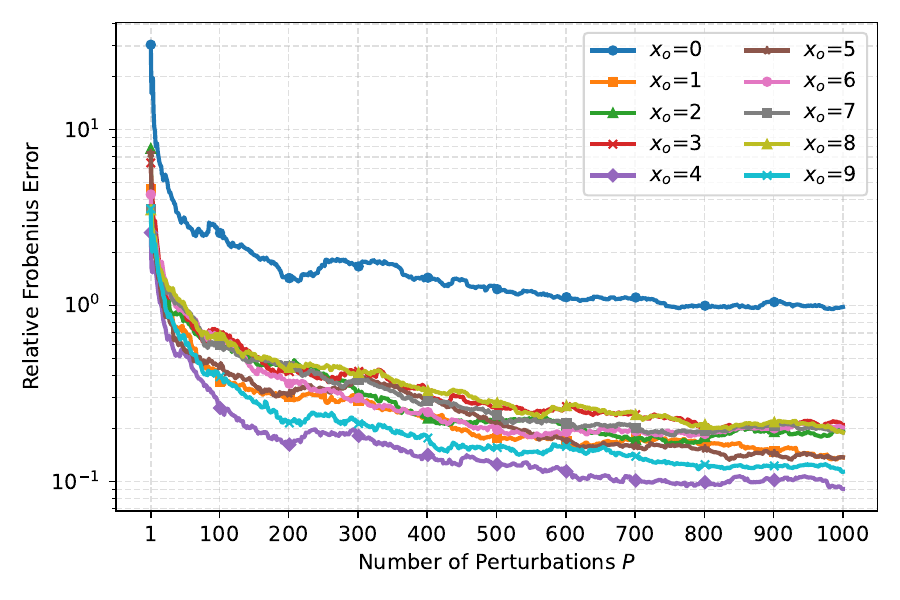}
    \caption{$\xu=4$ (LeNet, MNIST)}
    \label{fig:ntk_converg_4}
\end{figure}
\stopcontents[appendices]

\end{document}